\crefname{hypothesis}{Hypothesis}{Hypotheses}
\title{A Paired Autoencoder Framework for Inverse Problems via Bayes Risk Minimization
\thanks{Submitted to the editors January 24, 2025.
\funding{This material is based upon work supported by the U.S. Department of Energy, Office of Science, Office of Advanced Scientific Computing Research, Department of
Energy Computational Science Graduate Fellowship under Award Number DE-SC0024386. This work was partially supported by the National Science Foundation program under grants DMS-2152661 for M. Chung and DMS-2411197 for J. Chung. Any opinions, findings, and conclusions or recommendations expressed in this material are those of the author(s) and do not necessarily reflect the views of the National Science Foundation.}}}
\author{Emma Hart\thanks{Department of Mathematics, Emory University, Atlanta, GA 
  (\email{ehart5@emory.edu}, \email{jmchung@emory.edu}, \email{matthias.chung@emory.edu}).}
\and Julianne Chung\footnotemark[2]
\and Matthias Chung\footnotemark[2]}
\newcommand{\bfGamma}{{\boldsymbol{\Gamma}}}
\newcommand{\bfSigma}{{\boldsymbol{\Sigma}}}
\newcommand{\bfvarepsilon}{{\boldsymbol{\varepsilon}}}
\newcommand{\bfeps}{{\boldsymbol{\varepsilon}}}
\newcommand{\bftheta}{{\boldsymbol{\theta}}}
\newcommand{\bfA}{{\bf A}}
\newcommand{\bfB}{{\bf B}}
\newcommand{\bfD}{{\bf D}}
\newcommand{\bfE}{{\bf E}}
\newcommand{\bfI}{{\bf I}}
\newcommand{\bfK}{{\bf K}}
\newcommand{\bfL}{{\bf L}}
\newcommand{\bfM}{{\bf M}}
\newcommand{\bfP}{{\bf P}}
\newcommand{\bfU}{{\bf U}}
\newcommand{\bfV}{{\bf V}}
\newcommand{\bfX}{{\bf X}}
\newcommand{\bfY}{{\bf Y}}
\newcommand{\bfZ}{{\bf Z}}
\newcommand{\bfb}{{\bf b}}
\newcommand{\bfm}{{\bf m}}
\newcommand{\bfq}{{\bf q}}
\newcommand{\bfr}{{\bf r}}
\newcommand{\bfu}{{\bf u}}
\newcommand{\bfv}{{\bf v}}
\newcommand{\bfx}{{\bf x}}
\newcommand{\bfz}{{\bf z}}
\newcommand{\bfzero}{{\bf0}}
\newcommand{\bbE}{\mathbb{E}}
\newcommand{\bbR}{\mathbb{R}}
\newcommand{\calL}{\mathcal{L}}
\newcommand{\calZ}{\mathcal{Z}}
\DeclareFontFamily{U}{mathx}{\hyphenchar\font45}
\DeclareFontShape{U}{mathx}{m}{n}{<-> mathx10}{}
\DeclareSymbolFont{mathx}{U}{mathx}{m}{n}
\DeclareMathAccent{\widebar}{0}{mathx}{"73}
\newcommand{\norm} [2][]{\left\|#2\right\|_{#1}}           
\newcommand{\mdot} {\,\cdot\,}                             
\newcommand{\vertbar}{\rule[-1ex]{0.5pt}{2.5ex}}           
\newcommand{\horzbar}{\rule[.5ex]{2.5ex}{0.5pt}}           
\DeclareMathOperator*{\argmin}{arg\,min}                   
\newcommand{\rank} [1]  {{\rm rank}\left( #1 \right)}    
\renewcommand{\t} {^{\top}}                                
\newcommand{\fro}   {{\rm F}}                              
\newcommand{\trace} [1]  {{\rm tr\!}\left( #1 \right)}     
\newcommand{\eps} {\varepsilon}                            
\renewcommand{\vec}[1] {{\rm vec}(#1)}  
\definecolor{myorange}{RGB}{232,119,34} 
\definecolor{ocre}{RGB}{232,119,34} 
\definecolor{mygreen}{RGB}{28,172,0} 
\definecolor{myblack}{RGB}{12,12,12} 
\definecolor{maroon}{rgb}{0.5, 0.0, 0.0}
\definecolor{orange}{RGB}{232,119,34}
\definecolor{matlab1}{RGB}{0,    114,  189} 
\definecolor{matlab2}{RGB}{217,   83,   25}
\definecolor{matlab3}{RGB}{237,  177,   32}
\definecolor{matlab4}{RGB}{126,   47,  142}
\definecolor{matlab5}{RGB}{119,  172,   48}
\definecolor{matlab6}{RGB}{77,   190,  238}
\definecolor{matlab7}{RGB}{162,   20,   47}
\colorlet{designcolor}{matlab1}
\colorlet{ocre}{matlab1} 
\colorlet{myorange}{matlab1} 
\begin{document}
\nolinenumbers
\maketitle
\begin{abstract}
In this work, we describe a new data-driven approach for inverse problems that exploits technologies from machine learning, in particular autoencoder network structures.  We consider a paired autoencoder framework, where two autoencoders are used to efficiently represent the input and target spaces separately and optimal mappings are learned between latent spaces, thus enabling forward and inverse surrogate mappings. We focus on interpretations using Bayes risk and empirical Bayes risk minimization, and we provide various theoretical results and connections to existing works on low-rank matrix approximations. Similar to end-to-end approaches, our paired approach creates a surrogate model for forward propagation and regularized inversion. However, our approach outperforms existing approaches in scenarios where training data for unsupervised learning are readily available but training pairs for supervised learning are scarce.  Furthermore, we show that cheaply computable evaluation metrics are available through this framework and can be used to predict whether the solution for a new sample should be predicted well.
\end{abstract}

\begin{keywords}
inverse problems, autoencoders, Bayes risk minimization
\end{keywords}

\begin{MSCcodes}
65F22, 65F55, 68T07, 68U10
\end{MSCcodes}

\section{Introduction}
Inverse problems are ubiquitous, with scientific applications in biomedical and geophysical imaging \cite{hansen2010discrete}, atmospheric modeling \cite{enting2002inverse}, and data assimilation \cite{sanz2023inverse}, to name a few.  
Let us consider the following problem,
\begin{equation}
\label{eq:inverseproblem}
\bfb = A(\bfx)+\bfeps,
\end{equation}
where $\bfb\in\bbR^q$ contains observations, $A:\bbR^n\to\bbR^q$ represents a forward parameter-to-observation process, $\bfx\in\bbR^n$ contains parameters of interest, and $\bfvarepsilon\in\bbR^q$ represents noise.  
The inverse problem aims to determine the parameters $\bfx$, given the observed measurements $\bfb$ and knowledge of $A$. 

There are a multitude of computational challenges that arise when solving inverse problems, especially those with large-dimensional parameters and observation sets (i.e., $q$ and/or $n$ are large).  First, it is often the case that an accurate mathematical model $A$ is needed to represent the forward model, but only a surrogate model or approximation is available for computation.
Second, regularization or prior knowledge is needed to compute reliable solutions due to ill-posedness of the underlying problem, but selecting appropriate priors and regularization parameters is often a heuristic process that requires expert knowledge and application expertise. Third, sophisticated computational methods are needed to compute solutions, and these are often iterative optimization procedures that require multiple forward and adjoint solves. Another layer of complexity (but also opportunity) when solving inverse problems is determining how to incorporate any available data (e.g., training data or large scientific datasets) to improve or support the inversion process.  

In this work, we describe a new computational framework for inverse problems in which we couple machine learning tools for dimensionality reduction with surrogate models in a reduced space for the forward and inverse mappings. More specifically, consider an inverse mapping $\widebar \Phi: \bfb \to \bfx$ that maps observations $\bfb$ to parameters $\bfx$. 
Our approach can learn the inverse mapping $\widebar \Phi$ (as well as the forward mapping $A$) by training autoencoders on inputs $\bfb$ \emph{and} targets $\bfx$, while also discovering a mapping between the reduced latent spaces.  The proposed framework is rooted in theory for Bayes risk and empirical Bayes risk minimization, making the approach versatile and optimal under certain assumptions. Moreover, the paired autoencoder framework can be used to address various challenges related to both the forward and the inverse problem.  
 
Previous works in this field have used neural networks for full inversion (surrogate modeling) \cite{kulkarni2016reconnet}, regularization \cite{afkham2021learning,li2020nett}, uncertainty quantification \cite{goh2019solving,lan2022scaling}, and more \cite{bai2020deep,lucas2018using}. 
Similar to ideas in operator learning \cite{kovachki2023neural} and reduced order modeling \cite{lee2020model}, where surrogate operators and reduced models are learned in a latent space, we aim to utilize latent structures to learn mappings between input and target spaces for inverse problems.  We propose to use autoencoders to learn latent representations, and then pair the autoencoders by learning mappings between their latent spaces.  
If one considers a fully linear approach for the autoencoders and mappings between latent spaces, our proposed approach reduces to inversions based on principal component analysis (PCA) \cite{benner2015survey,antoulas2020interpolatory}. 
Our proposed framework has an inherent regularizing property, which is beneficial since there is no need to select regularization parameters or utilize expensive algorithms to solve because we can efficiently represent the entire regularized inverse map.

This work carries similarities to methods that have been developed for image-to-image translation to predict, for instance, geophysical properties from measurements \cite{feng2022intriguing, feng2024auto, chung2024paired, wang2024wave, feng2024hidden, piening2024paired}. For example, an approach has been developed in \cite{feng2022intriguing} that learns only the decoder from input/target sample pairs, where the encoder is predefined with a fixed kernel (e.g., sine, Fourier, or Gaussian). A linear mapping is established and fixed between the transformed input/target spaces, allowing the decoder to be optimized via stochastic gradient descent. Another method leverages masked autoencoders independently in the input and target spaces, which are subsequently fixed and connected by a linear mapping between their latent spaces \cite{feng2024auto}. A likelihood-free inverse surrogate is considered in \cite{chung2024paired}, where two coupled autoencoders are trained jointly using a combined loss function that incorporates both inversion and autoencoder losses, thereby facilitating a shared latent space. Two metrics -- the relative residual estimate and the recovered model autoencoder -- were used to assess whether a new sample aligns with the network’s training distribution.  A similar framework was also used for image generation, where two autoencoders are trained and a joint diffusion process is used in the latent space to generate high-quality input/target pairs that satisfy physical constraints imposed by the wave equation \cite{wang2024wave}.  Interestingly, even in high-dimensional nonlinear problems, the mappings between latent spaces are empirically found to be linear \cite{feng2022intriguing, feng2024hidden}.

Our work builds on these papers in some key ways; we introduce and explore the paired autoencoder methodology as a general framework for inverse problems.  We train the autoencoders in parallel and learn a linear map between their latent spaces, like \cite{feng2024auto} and unlike \cite{chung2024paired}.  Contrary to previous works, we investigate the paired autoencoder framework via a Bayes risk and empirical Bayes risk minimization perspective, thereby enabling theoretical results for simplified cases, as well as drawing connections to traditional techniques from scientific computing.  

\paragraph{Overview of contributions}  In this work, we describe a new data-driven framework for inverse problems called \emph{Paired Autoencoders for Inference and Regularization (PAIR)}.  The PAIR framework exploits autoencoder networks for both the input and target spaces separately and uses an optimal mapping between latent spaces. This approach confers several advantages:
\begin{itemize}
\item Compared to end-to-end approaches, i.e., networks mapping $\bfb \mapsto \bfx$, the PAIR network is superior for problems with many training samples but few input-target pairs.  Moreover, since the PAIR approach decouples the model and the dimension reduction processes, autoencoders can have latent spaces with different dimensions, input and target datasets can be of different sizes, and training via self-supervised learning techniques can be done independently and in parallel.
 The different dimensions of the latent spaces provide flexibility in representation (e.g., so that compression can be done at different levels),
and the propagation of uncertainty associated with model uncertainties, noise in the data, and data compression can also be separated. 
\item By building on connections to Bayes risk and empirical Bayes risk minimization problems, we provide new theoretical results for linear autoencoders and linear latent space mappings. In addition to providing theory for linear autoencoders related to optimal low-rank matrix approximation, we provide theory for optimal linear mappings between latent spaces.  These provide interpretations for the PAIR forward and inverse surrogates and are relevant for both linear and nonlinear autoencoders.
\item The PAIR framework can be applied to general inverse problems and can appeal to a broad scientific computing audience. Although we focus on the Bayes risk minimization interpretation resulting in a minimization of the expected loss in the 2-norm, other error metrics could be used.  Moreover, the PAIR framework provides new insights into reduced-order and surrogate modeling by using the perspective of coupled autoencoders and neural networks in general.
\end{itemize}

An overview of the paper is as follows.  In \Cref{sec:background} we provide background on regularization for solving problems such as \Cref{eq:inverseproblem}, encoder/decoder networks, and autoencoders in particular.  Then in  \Cref{sec:PAIRframework}, we present the PAIR framework, focusing on the theory for linear inverse problems with linear autoencoders and linear mappings between latent spaces. Numerical experiments provided in \Cref{sec:numerics} illustrate the benefits of our approach for various imaging examples.  We remark that although we focus on examples from imaging, the work is broadly applicable since similar problems arise in different scientific applications. 
Conclusions and future work are provided in \Cref{sec:conclusions}.

\section{Background}
\label{sec:background}
Solving inverse problems such as \Cref{eq:inverseproblem} requires a method that can approximate parameters $\bfx$ from noisy data $\bfb$.  This is a difficult and often ill-posed problem in applications; the solution may not be unique, and it may not depend continuously on observed data \cite{hadamard1923lectures}. The direct computation of an inverse (or pseudo-inverse) solution can be computationally burdensome, and such solutions may be highly susceptible to noise corruption \cite{hansen2010discrete}.

\paragraph{Regularization}
Usually, regularization stabilizes the inversion process, creating a solution that is not so sensitive to errors in the observations.  Standard forms of regularization for large-scale inverse problems include variational and iterative regularization \cite{chung2021computational}, which confer different advantages and disadvantages.  Variational approaches can easily accommodate constraints and priors, but choosing a suitable regularization parameter is often computationally expensive.  Iterative regularization approaches require no such parameter, but a good stopping criterion is essential and priors and constraints are not as easy to incorporate.  A shared disadvantage is that both require full model evaluations (and adjoints) during optimization.  Data-driven regularization approaches address these weaknesses by leveraging available data, models, and domain-specific knowledge.  Examples include dictionary learning, bilevel learning, Markov random field-type regularizers, and, increasingly, deep neural networks \cite[Section 5]{arridge2019solving}.

\paragraph{Encoder-Decoder networks}
One popular choice is to implement deep neural networks with Encoder-Decoder (ED) architectures, which consist of two parts (the encoder and the decoder) with a hidden layer with $\bfz \in \bbR^r$ that describes a representation of the encoded/decoded signal.  Typically, the encoder network is comprised of layers that successively reduce the dimension of the input vector, to construct a low-dimensional vector $\bfz$ referred to as the \emph{latent} vector. Then, the decoder network increases the dimension through the network to match the output dimension.  More specifically, the two main parts can be defined using parametric mappings.
\begin{itemize}
    \item The encoder network $e:\bbR^q \times \bbR^{p^{\rm e}}\to \bbR^r$ with network parameters $\bftheta^{\rm e} \in \bbR^{p^{\rm e}}$ maps an input $\bfb$ to the hidden layer or latent variables $\bfz$, i.e.,
\begin{equation}
    \bfz = e(\bfb;\,\bftheta^{\rm e}).
\end{equation}
\item The decoder network $d:\bbR^r\times \bbR^{p^{\rm d}} \to \bbR^n$ with network parameters $\bftheta^{\rm d} \in \bbR^{p^{\rm d}}$ maps the latent variables to the output $\bfx$, i.e.,
\begin{equation}
    \bfx = d(\bfz;\, \bftheta^{\rm d}).
\end{equation}
\end{itemize}
The entire ED network then takes the form $\Phi_{\rm ed}(\bfb; \, \bftheta^{\rm e}, \bftheta^{\rm d}) := d(e(\bfb; \, \bftheta^{\rm e})\, ;\, \bftheta^{\rm d})$. See \Cref{fig:EDnetwork} for a visual representation of an ED network for mapping $\bfb$ to $\bfx$.

\begin{figure}[bthp]
    \centering
    \begin{tikzpicture}
    	\node[fill=matlab5!50, minimum width=0.5cm, minimum height=3.5cm] (x) at (0,0) {$\bfb$};
    	
    	\draw[fill=matlab6!50,draw=none] ([xshift=0.5cm]x.north east) -- ([xshift=2.5cm,yshift=0.5cm]x.east) -- ([xshift=2.5cm,yshift=-0.5cm]x.east) -- ([xshift=0.5cm]x.south east) -- cycle; 
    	\node at (1.75,0) {$\begin{matrix}\mbox{encoder} \\ e \end{matrix}$};
    	
    	\node[fill=matlab3!50, minimum width=0.5cm, minimum height=1.0cm] (Zx) at (3.5cm,0) {$\bfz$};
    
     	\draw[fill=matlab4!50,draw=none] ([xshift=0.5cm]Zx.north east) -- ([xshift=2.5cm,yshift=0.75cm]Zx.north east) -- ([xshift=2.5cm,yshift=-0.75cm]Zx.south east) -- ([xshift=0.5cm]Zx.south east) -- cycle;
    	\node at (5.25,0) {$\begin{matrix}\mbox{decoder} \\ d \end{matrix}$};
    
    	\node[fill=matlab1!50, minimum width=0.5cm, minimum height=2.5cm] (B) at (7,0) {$\bfx$}; 
    \end{tikzpicture}
    \caption{End-to-end encoder-decoder network for inversion. The network is mapping input vector $\bfb$ to output vector $\bfx$.  The encoder maps the input vector $\bfb$ to the latent variable $\bfz$ and the decoder maps this latent variable $\bfz$ to the output $\bfx$.}
    \label{fig:EDnetwork}
\end{figure}
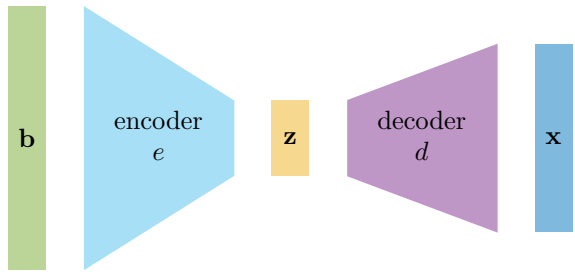

\bigskip 

These parameterized ED networks can be surrogates to unidentified or computationally demanding mappings $\widebar\Phi: \bfb \to \bfx$. To construct a direct end-to-end inversion network requires network parameters $\bftheta^{\rm e}$ and $\bftheta^{\rm d}$ that approximate such a mapping, i.e.,  $\Phi_{\rm ed}(\bfb; \, \bftheta^{\rm e}, \bftheta^{\rm d})   \approx \widebar\Phi(\bfb)$.  In a data-driven approach, the network parameters are calibrated by minimizing some \emph{loss function}, given a sufficient amount of representative and labeled input-target pairs $\left\{(\bfb_j,\bfx_j)\right\}_{j = 1}^J$, e.g.,
\begin{equation} \label{eq:loss}
    \min_{\bftheta^{\rm e}, \bftheta^{\rm d}} \quad \tfrac{1}{J} \sum_{j = 1}^J \calL(\Phi_{\rm ed}(\bfb_j;\bftheta^{\rm e}, \bftheta^{\rm d}); \bfx_j),
\end{equation}
where $\calL(\mdot;\mdot)$ measures the discrepancy between $\bfx_j$  and $\Phi_{\rm ed}(\bfb_j; \, \bftheta^{\rm e}, \bftheta^{\rm d})$ for a particular set of parameters $\bftheta^{\rm e}, \bftheta^{\rm d}$.
Depending on the data at hand and the selected network architecture, regularization on the network weights may be imposed in \Cref{eq:loss}.

\paragraph{Autoencoder networks} An \emph{autoencoder} is a special type of ED network that maps an input to itself, for example, $\Phi_{\rm ae}^\bfb: \bbR^q \to \bbR^q$ with $\Phi_{\rm ae}^\bfb(\bfb; \, \bftheta^{\rm e}_\bfb, \bftheta^{\rm d}_\bfb) \approx \bfb$, \cite{goodfellow2016deep}. The encoder represents the input in a lower dimensional space while the decoder attempts to reconstruct the input from the low dimensional representation $\bfz = e_\bfb(\bfb; \bftheta^{\rm e}_\bfb)$. In the training of autoencoders, the flow of data through the low-dimensional latent space forces the autoencoder to find a low-dimensional representation for a set of data, during which redundant data is eliminated. Hence, such networks are classically used in dimensionality and noise reduction, as well as data compression applications, see e.g., \cite{hinton2006reducing,salakhutdinov2007restricted,torralba2008small,wang2016auto,chung2024sparse}. The network parameters $\bftheta^{\rm e}_\bfb, \bftheta^{\rm d}_\bfb$ of an autoencoder $\Phi_{\rm ae}^\bfb$ are found by minimizing a loss function given \emph{unlabeled} data $\{\bfb_j\}_{j = 1}^Q$, which is considered an \emph{unsupervised} or \emph{self-supervised} learning task. In an identical fashion an autoencoder for $\bfx$ can be considered $\Phi_{\rm ae}^\bfx: \bbR^n \to \bbR^n$ with $\Phi_{\rm ae}^\bfx(\bfx; \, \bftheta^{\rm e}_\bfx, \bftheta^{\rm d}_\bfx) \approx \bfx$. The reduced order modeling community has embraced autoencoders as a tool for model reduction, especially as a nonlinear generalization within the proper orthogonal decomposition framework \cite{gonzalez2018deep,pichi2024graph,lee2020model,benner2015survey,antoulas2020interpolatory}.  

\section{PAIR framework}
\label{sec:PAIRframework}
Our proposed PAIR approach utilizes a reduced-order network architecture that combines two separate autoencoders for the inputs and the targets with optimal mappings between their respective latent spaces. See \Cref{fig:PAIR} for an illustration.
The autoencoders $\Phi_{\rm ae}^\bfx = d_\bfx \circ e_\bfx$ and $\Phi_{\rm ae}^\bfb = d_\bfb \circ e_\bfb$  provide dimensionality reduction for the input and target parameter spaces that when combined with the mappings between latent spaces, $m\colon \bbR^{r_\bfx} \to \bbR^{r_\bfb}$ and $m^\dagger\colon \bbR^{r_\bfb} \to \bbR^{r_\bfx}$, provide a data-driven surrogate for the forward simulation and the inversion of the system via the PAIR surrogates
\begin{equation}
    P = d_\bfb \circ m \circ e_\bfx \quad\quad \text{and} \quad\quad P^\dagger  = d_\bfx \circ m^\dagger \circ e_\bfb .
\end{equation}
The autoencoders $\Phi_{\text{ae}}^\bfx$ and $\Phi_\text{ae}^\bfb$ are learned independently on $\bfx$ \emph{and} $\bfb$, correspondingly, and then optimal mappings $m,m^\dagger$ between the reduced latent spaces $\bbR^{r_\bfx}$ and $\bbR^{r_\bfb}$ are obtained. 

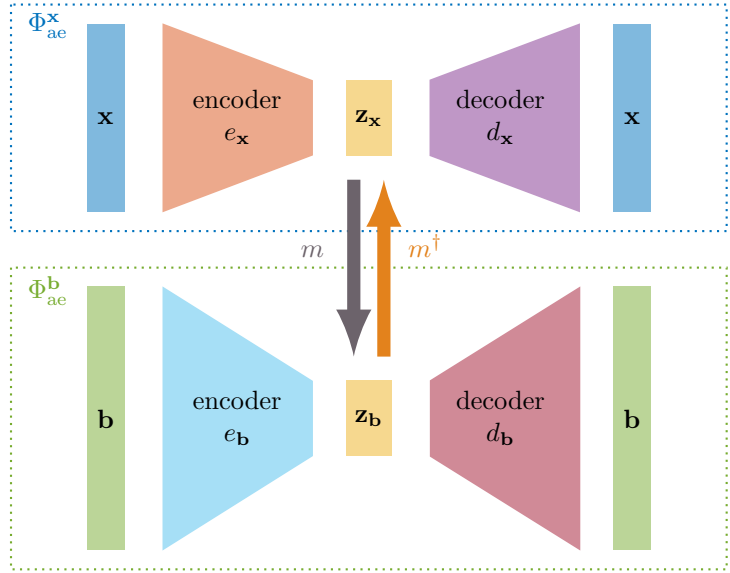
\begin{figure}[bthp]
    \centering
    \begin{tikzpicture}    
	\node[fill=matlab1!50, minimum width=0.5cm, minimum height=2.5cm] (x) at (0,0) {$\bfx$};
	
	\draw[fill=matlab2!50,draw=none] ([xshift=0.5cm]x.north east) -- ([xshift=2.5cm,yshift=0.5cm]x.east) -- ([xshift=2.5cm,yshift=-0.5cm]x.east) -- ([xshift=0.5cm]x.south east) -- cycle; 
	\node at (1.75,0) {$\begin{matrix}\mbox{encoder} \\ e_{\bfx} \end{matrix}$};
	
	\node[fill=matlab3!50, minimum width=0.5cm, minimum height=1.0cm] (Zx) at (3.5cm,0) {$\bfz_{\bfx}$};
	
	\draw[fill=matlab4!50,draw=none] ([xshift=0.5cm]Zx.north east) -- ([xshift=2.5cm,yshift=0.75cm]Zx.north east) -- ([xshift=2.5cm,yshift=-0.75cm]Zx.south east) -- ([xshift=0.5cm]Zx.south east) -- cycle;
	\node at (5.25,0) {$\begin{matrix}\mbox{decoder} \\ d_{\bfx} \end{matrix}$};
	
	\node[fill=matlab1!50, minimum width=0.5cm, minimum height=2.5cm] (X) at (7,0) {$\bfx$};

 	\node[fill=matlab5!50, minimum width=0.5cm, minimum height=3.5cm] (b) at (0,-4) {$\bfb$};
	
	\draw[fill=matlab6!50,draw=none] ([xshift=0.5cm]b.north east) -- ([xshift=2.5cm,yshift=0.5cm]b.east) -- ([xshift=2.5cm,yshift=-0.5cm]b.east) -- ([xshift=0.5cm]b.south east) -- cycle; 
	\node at (1.75,-4) {$\begin{matrix}\mbox{encoder} \\ e_{\bfb} \end{matrix}$};
	
	\node[fill=matlab3!50, minimum width=0.5cm, minimum height=1.0cm] (Zb) at (3.5cm,-4) {$\bfz_{\bfb}$};
	
	\draw[fill=matlab7!50,draw=none] ([xshift=0.5cm]Zb.north east) -- ([xshift=2.5cm,yshift=1.25cm]Zb.north east) -- ([xshift=2.5cm,yshift=-1.25cm]Zb.south east) -- ([xshift=0.5cm]Zb.south east) -- cycle;
	\node at (5.25,-4) {$\begin{matrix}\mbox{decoder} \\ d_{\bfb} \end{matrix}$};
	
	\node[fill=matlab5!50, minimum width=0.5cm, minimum height=3.5cm] (B) at (7,-4) {$\bfb$};
\draw[matlab1!100, dotted, thick] ([xshift=-1cm, yshift=0.25cm]x.north west) rectangle ([xshift=1cm, yshift=-0.25cm]X.south east) node[left, very near start, xshift=-0.3cm, yshift=0.1cm] {$\Phi_{\rm ae}^{\bfx}$};
\draw[matlab5!100, dotted, thick] ([xshift=-1cm, yshift=0.25cm]b.north west) rectangle ([xshift=1cm, yshift=-0.25cm]B.south east) node[left, very near start, xshift=-0.3cm, yshift=0.2cm] {$\Phi_{\rm ae}^{\bfb}$};

\begin{scope}[-latex,shorten >=9pt,shorten <=9pt,line width=5pt]
    \draw[matlab1!50!matlab2] ([xshift=-0.2cm]Zx.south) to ([xshift=-0.2cm]Zb.north);
    \draw[matlab2!50!matlab3] ([xshift=0.2cm]Zb.north) to ([xshift=0.2cm]Zx.south);
\end{scope}
\node[matlab1!50!matlab2] (f_forward) at (2.75,-1.77) {$m$};
\node[matlab2!50!matlab3] (f_inverse) at (4.25,-1.7) {$m^\dagger$};
\end{tikzpicture}
    \label{fig:PAIR}
    \caption{PAIR network mapping.  Two autoencoders are used to compress vectors $\bfx$ and $\bfb$ (on the top and bottom respectively), where the corresponding latent spaces are represented using variables $\bfz_\bfx$ and $\bfz_\bfb$. 
    Mappings between latent spaces are denoted as $m$ and $m^\dagger$. The PAIR network provides both a data-driven inverse mapping $d_\bfx \circ m^\dagger \circ e_\bfb$ and a data-driven forward surrogate approximation $d_\bfb \circ m \circ e_\bfx$.}
\end{figure}

In \Cref{sub:theory_autoencoder}, we develop theory for autoencoders where the encoder and decoder are both linear mappings.  We show that finding the linear autoencoder reduces to a rank-constrained optimization problem in a Bayes risk minimization interpretation.  In an empirical Bayes risk minimization framework, where training data are given, we provide closed-form solutions to the optimization problem, since the above framework reduces to classical approaches based on the singular value decomposition (SVD). The connection between linear autoencoder and SVD/PCA has been discussed in various works, for instance, \cite{bourlard1988auto,baldi1989neural,plaut2018principal,bao2020regularized}. However, these works do not target the more general Bayes risk minimization interpretation as we do here. 
Then, in \Cref{sub:theory_linmap} we focus on linear problems where $\bfA \in \bbR^{q \times n}$. We assume that both autoencoders (for $\bfx$ and $\bfb$) are linear mappings, and we develop theory for linear mappings $m$ and  $m^\dagger$ between latent spaces in both the Bayes risk and empirical Bayes risk settings.  We conclude the section by discussing surrogates provided by the fully linear PAIR framework, with connections to classical linear algebra results.

\subsection{Theory for linear autoencoders} \label{sub:theory_autoencoder}
Let us consider the case of a (single) \emph{linear} autoencoder network, in which we map an input $\bfx\in\bbR^n$ to itself, by first \textit{encoding} the signal $\bfx$ with an encoder $e:\bbR^n \to \bbR^r$ to a latent space with $\bfz = e(\bfx;\vec{\bfE}) = \bfE\bfx$ and $\bfE\in \bbR^{r \times n}$, and then \textit{decoding} $\bfz$ with a decoder  $d:\bbR^r \to \bbR^n$ back to itself with $d(\bfz; \vec{\bfD}) = \bfD\bfz$ and $\bfD\in \bbR^{n \times r}$.  Subscripts on the encoder and decoder are omitted for clarity in this section. We assume the latent space dimension $r$ to be arbitrary but fixed with $0<r<n$. 
With these assumptions, the autoencoder is given by
\begin{equation}
\label{eq:linautoencoder}
    \Phi_{\rm ae}^\bfx(\bfx; \, \vec{\bfE}, \vec{\bfD}) = \bfD\bfE\bfx \equiv \bfY \bfx,
\end{equation}
where $\bfY = \bfD \bfE$ and $\vec{\bfE}$ is the vectorization of matrix $\bfE$.
Thus, linear autoencoders aim to find matrices $\bfE$ and $\bfD$ such that the mapping $\bfD\bfE\bfx$ is close to $\bfx$. 

We assume $\bfx$ is a realization of a random variable $X$ with some underlying probability distribution. We use a Bayes risk minimization interpretation to show that the problem of estimating linear autoencoders reduces to a low-rank minimization problem with an expected value loss function.  Then, we describe an empirical Bayes risk minimization approach that uses sampled data.

\subsubsection{A Bayes risk minimization interpretation}\label{sub:ae_Bayes}
Let $X$ be a random variable with a given probability distribution, and consider a linear autoencoder \cref{eq:linautoencoder}. The goal is to determine matrices $\widetilde\bfE$ and $\widetilde\bfD$ such that a predefined distance measure between $\bfD\bfE X$ and $X$ is minimized. One possibility is to minimize the expected loss,
\begin{equation}
\label{eq:expectedloss}
(\widetilde\bfE,\widetilde \bfD) = \argmin_{\bfE,\bfD} \ \bbE \norm[2]{\bfD\bfE X - X}^2.
\end{equation}
We can simplify this to 
\begin{equation}
\label{eq:linauto_lr}
\widetilde\bfY = \argmin_{\rank{\bfY}\leq r} \  \bbE \norm[2]{\bfY X - X}^2 = \bbE \norm[2]{(\bfY-\bfI) X}^2 .
\end{equation}

We remark that the use of the 2-norm is a design choice and other metrics could be used to provide different designs.  For example, an alternative is to consider the Kulback-Leibner divergence between the two probability distributions of $X$ and $\bfY X$, where $\rank{\bfY}\leq r$. Another option would be to consider a Wasserstein distance \cite{piening2024paired}. Various design problems can be considered with different design criteria e.g., A-design, D-design \cite{atkinson2007optimum, pukelsheim2006optimal, ucinski2004optimal}.

Let us assume we are given the second moment of the random variable $X$, $$\bfGamma =  \bfL\bfL\t =  \bbE X X\t
$$
which is assumed to be symmetric and positive definite (SPD). Then the objective function in \Cref{eq:linauto_lr} can be written as
\begin{align*}\label{eq:bayesEXX} 
\bbE \norm[2]{(\bfY-\bfI) X}^2
&= \bbE \ \trace{X\t (\bfY-\bfI)\t (\bfY-\bfI) X } = \bbE \ \trace{(\bfY-\bfI)\t(\bfY-\bfI) X X\t}\\
&= \trace{(\bfY-\bfI)\t (\bfY-\bfI) \bfGamma }
= \norm[\fro]{ (\bfY-\bfI) \bfL }^2,
\end{align*}
where $\norm[\fro]{\mdot}$ denotes the Frobenius norm and the optimization problem reduces to
\begin{equation}
\label{eq:lr_opt}
\min_{\rank{\bfY}\leq r} \norm[\fro]{\bfY \bfL - \bfL }^2.
\end{equation}
We have the following result.

\begin{theorem}
\label{thm:fullrowrank}
Let matrix $\bfL \in \bbR^{n \times n}$ have full rank.  Additionally, let $\bfL = \bfU_{\bfL} \bfSigma_{\bfL} \bfV_{\bfL}\t$ be the SVD of $\bfL$, where $\bfSigma_{\bfL}$ is a diagonal matrix containing the nonzero singular values $\sigma_1 \geq \cdots \geq \sigma_n$, and the orthogonal matrices $\bfU_{\bfL} = [\bfu_1, \bfu_2, \ldots, \bfu_n]$ and $\bfV_{\bfL} = [\bfv_1, \bfv_2, \ldots, \bfv_n]$ contain the left and right singular vectors $\bfu_i, \bfv_j \in \bbR^n$ for $i, j = 1, \ldots, n$.  For positive integer $r\leq n$ we define the truncated SVD,
\begin{equation}
\label{eq: svd}
    \bfL_r = \bfU_{\bfL, r} \bfSigma_{\bfL, r} \bfV_{\bfL, r}\t = 
    \begin{bmatrix}
        \vertbar & \vertbar & & \vertbar \\
        \bfu_1 & \bfu_2 & \cdots & \bfu_r \\
        \vertbar & \vertbar& & \vertbar \\  
    \end{bmatrix}
    \begin{bmatrix}
        \sigma_1 & & & \\
         & \sigma_2 & & \\
         & & \ddots & \\
         & & & \sigma_r
    \end{bmatrix}
    \begin{bmatrix}
        \horzbar & \bfv_1\t & \horzbar \\
        \horzbar & \bfv_2\t & \horzbar \\
        & \vdots & \\
        \horzbar & \bfv_r\t & \horzbar \\
    \end{bmatrix}.
\end{equation}
Then
$$\widetilde \bfY = \bfU_{\bfL,r} \bfU_{\bfL,r}\t$$
is the solution to the minimization problem
$$\min_{\rank{\bfY}\leq r} \| \bfY \bfL - \bfL\|_\fro^2 ,$$
having minimal Frobenius norm $\norm[\fro]{\bfY}$. This solution is unique if and only if either $r\geq n$
or $1\leq r< n$ and $\sigma_r(\bfL)>\sigma_{r+1}(\bfL)$.
\end{theorem}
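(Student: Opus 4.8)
The plan is to use the full-rank (hence invertibility) hypothesis on $\bfL$ to reduce the rank-constrained problem to a standard best low-rank approximation problem. Since $\bfL$ is invertible, the map $\bfY \mapsto \bfZ := \bfY\bfL$ is a linear bijection of $\bbR^{n\times n}$ that preserves rank, and $\norm[\fro]{\bfY\bfL - \bfL}^2 = \norm[\fro]{\bfZ - \bfL}^2$, so the problem is equivalent to $\min_{\rank{\bfZ}\le r}\norm[\fro]{\bfZ - \bfL}^2$. By the Eckart--Young--Mirsky theorem this is solved by the truncated SVD $\bfZ = \bfL_r$ of \cref{eq: svd}. Undoing the substitution yields $\widetilde\bfY = \bfL_r\bfL^{-1}$, and since orthonormality of the singular vectors gives $\bfL_r = (\bfU_{\bfL,r}\bfU_{\bfL,r}\t)\bfL$, this collapses to $\widetilde\bfY = \bfU_{\bfL,r}\bfU_{\bfL,r}\t$ -- the orthogonal projector onto $\mathrm{span}\{\bfu_1,\dots,\bfu_r\}$. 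This already establishes that $\widetilde\bfY$ is \emph{a} solution of the stated form.

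To handle the minimal-norm and uniqueness claims I would characterize the full solution set through a second, diagonalizing change of variables. Put $\bfG = \bfU_\bfL\t\bfY\bfU_\bfL$, so $\rank{\bfG} = \rank{\bfY}$ and $\norm[\fro]{\bfG} = \norm[\fro]{\bfY}$ by orthogonal invariance, and rewrite the objective as $\norm[\fro]{(\bfG - \bfI)\bfSigma_\bfL}^2 = \sum_{j=1}^n \sigma_j^2 \norm[2]{\bfg_j - \bfe_j}^2$, where $\bfg_j$ are the columns of $\bfG$ and $\bfe_j$ the standard basis vectors. With the column space $\calS = \mathrm{range}(\bfG)$ of dimension $s\le r$ fixed, each summand (here $\sigma_j > 0$) is uniquely minimized by the orthogonal projection $\bfg_j = \bfP_\calS\bfe_j$, so every minimizer is itself the projector $\bfG = \bfP_\calS$ and the objective equals $\trace{\bfSigma_\bfL^2} - \sum_j \sigma_j^2 (\bfP_\calS)_{jj}$. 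Using $0\le (\bfP_\calS)_{jj}\le 1$, $\sum_j (\bfP_\calS)_{jj} = s$, and $\sigma_1\ge\cdots\ge\sigma_n > 0$, a short weighting estimate -- namely $\sum_j \sigma_j^2 (\bfP_\calS)_{jj} - \sum_{j=1}^r \sigma_j^2 \le \sigma_r^2 (s-r) \le 0$ -- gives $\sum_j \sigma_j^2 (\bfP_\calS)_{jj} \le \sum_{j=1}^r \sigma_j^2$ with equality forcing $s = r$. Hence every minimizer $\bfG$ is a rank-$r$ orthogonal projector, so $\norm[\fro]{\bfG}^2 = \trace{\bfP_\calS} = r$; all minimizers therefore share the Frobenius norm $\sqrt r$, and in particular $\widetilde\bfY$ attains the minimal value.

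For the ``if and only if'' I would inspect when equality in the weighting estimate pins down $\calS$ completely: besides $s = r$, equality forces $(\bfP_\calS)_{jj} = 1$ whenever $\sigma_j^2 > \sigma_r^2$ and $(\bfP_\calS)_{jj} = 0$ whenever $\sigma_j^2 < \sigma_r^2$. If $r\ge n$ the rank constraint is vacuous and invertibility of $\bfL$ forces $\bfY = \bfI = \bfU_{\bfL,n}\bfU_{\bfL,n}\t$. If $1\le r<n$ and $\sigma_r > \sigma_{r+1}$, then every index with $\sigma_j^2 = \sigma_r^2$ lies in $\{1,\dots,r\}$ and every index with $\sigma_j^2 < \sigma_r^2$ lies in $\{r+1,\dots,n\}$, so with $\sum_j(\bfP_\calS)_{jj} = r$ we obtain $(\bfP_\calS)_{jj} = 1$ for $j\le r$ and $=0$ for $j>r$, which for an orthogonal projector forces $\calS = \mathrm{span}\{\bfe_1,\dots,\bfe_r\}$, i.e.\ $\bfG = \diag(\bfI_r,\bfzero)$ and $\bfY = \bfU_{\bfL,r}\bfU_{\bfL,r}\t$ uniquely. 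Conversely, if $r < n$ and $\sigma_r = \sigma_{r+1}$, the distinct matrix $\bfY = \bfU_\bfL\bfP_\calS\bfU_\bfL\t$ with $\calS = \mathrm{span}\{\bfe_1,\dots,\bfe_{r-1},(\bfe_r+\bfe_{r+1})/\sqrt 2\}$ is a rank-$r$ projector meeting these equality conditions, hence also a minimizer, and uniqueness fails. The main obstacle is the bookkeeping in this last step -- carefully describing the minimizer set under tied singular values and checking both directions of the equivalence -- since the closed form $\widetilde\bfY = \bfU_{\bfL,r}\bfU_{\bfL,r}\t$ itself follows immediately from Eckart--Young--Mirsky once the substitution is made.
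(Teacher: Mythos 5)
Your argument is correct and complete, but it is genuinely different in character from what the paper does: the paper offers no proof at all for this theorem, deferring entirely to Theorem~3.1 of the cited reference on optimal low-rank regularized inverse matrices, whereas you give a self-contained derivation. Your route --- substituting $\bfZ=\bfY\bfL$ to invoke Eckart--Young--Mirsky, then passing to $\bfG=\bfU_\bfL\t\bfY\bfU_\bfL$ and showing via the column-wise decomposition $\norm[\fro]{(\bfG-\bfI)\bfSigma_\bfL}^2=\sum_j\sigma_j^2\norm[2]{\bfg_j-\bfe_j}^2$ that every minimizer is a rank-$r$ orthogonal projector $\bfP_\calS$ --- buys strictly more than the citation: it characterizes the \emph{entire} solution set, shows all minimizers share the Frobenius norm $\sqrt{r}$ (so the ``minimal norm'' qualifier in the statement is automatic in this full-rank setting, unlike in the cited reference's more general rank-deficient case), and reduces the uniqueness equivalence to a transparent combinatorial condition on the diagonal entries of $\bfP_\calS$ under tied singular values, with an explicit counterexample subspace when $\sigma_r=\sigma_{r+1}$. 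The one point worth stating explicitly when writing this up is the observation that for any minimizer $\bfG^*$ with range contained in an $r$-dimensional $\calS$, replacing any column by $\bfP_\calS\bfe_j$ keeps all columns in $\calS$ (hence preserves $\rank{\bfG}\leq r$) while strictly decreasing the corresponding summand since $\sigma_j>0$; this is the step that forces $\bfG^*=\bfP_\calS$ and it relies essentially on the full-rank hypothesis on $\bfL$.
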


The proof follows directly from Theorem 3.1 in \cite{chung2017optimal}.
Notice that the theorem states that for $r<n$, the low-rank solution is unique if $\sigma_r(\bfL)>\sigma_{r+1}(\bfL)$, but the decomposition of $\widetilde \bfY$ into encoder $\widetilde \bfE$ and decoder $\widetilde \bfD$ is not unique since one could define for any invertible $r \times r$ matrix $\bfK,$
\begin{equation}\label{eq:aes}
    \widetilde \bfY = \underbrace{\bfU_{\bfL,r}  \bfK}_{\widetilde \bfD} \underbrace{\bfK^{-1}\bfU_{\bfL,r}\t}_{\widetilde \bfE}.
\end{equation}
The Bayes risk minimization approach requires the second moment of a random variable $X$, i.e., $\bfGamma$.  However, for problems where only samples or training data are available, we must consider an empirical Bayes risk minimization approach or a hybrid approach.

\subsubsection{An empirical Bayes risk minimization approach}
\label{subsub:empBayes linear}
Suppose we are given realizations $\bfx_1,\ldots, \bfx_N$ of $X$.  Let $\bfX = [\bfx_1,\ldots, \bfx_N] \in \bbR^{n \times N}$ and define the sample second moment matrix,
\begin{align}  \widebar \bfGamma & = \tfrac{1}{N-1}\sum_{j = 1}^N \bfx_j \bfx_j\t = \tfrac{1}{N-1} \bfX \bfX\t.
\end{align}
We describe two approaches to exploit the samples.
One approach would be to use $\widebar \bfGamma$ in the Bayes risk minimization approach described in \Cref{sub:ae_Bayes}. That is, the optimal linear autoencoder is given by
$$\widehat \bfY = \bfU_{\widebar \bfL,r} \bfU_{\widebar \bfL,r}\t, $$
where
$\widebar \bfGamma =\widebar  \bfL \widebar \bfL\t$ with potentially added SPD matrix to ensure that $\widebar \bfGamma$ is positive definite.

Another approach is to work directly with the samples, using them to approximate the Bayes risk minimization problem \cref{eq:linauto_lr} with an empirical Bayes risk minimization problem,
\begin{equation}
\min_{\rank{\bfY}\leq r} \tfrac{1}{N}\sum_{j = 1}^N \norm[2]{(\bfY-\bfI) \bfx_j}^2.
\end{equation}
Thus, obtaining an optimal linear autoencoder using an empirical Bayes risk minimization approach corresponds to solving 
\begin{equation}
    \label{eq:empBayesae}
    \widehat \bfY = \argmin_{\rank{\bfY}\leq r}||(\bfY - \bfI_n)\bfX||_\fro^2.
\end{equation}
If $\bfX$ has full row rank, \Cref{thm:fullrowrank} provides the optimal low-rank matrix.  However, this may not be the case, so we derive the results starting from Theorem 3.1 in \cite{chung2017optimal}.

Let $k = \rank{\bfX}$ and $\bfX = \bfU_{\bfX} \bfSigma_{\bfX} \bfV_{\bfX}\t$ be the SVD of $\bfX$, where $\bfSigma_{\bfX} \in \bbR^{n \times N}$ is a diagonal matrix containing the singular values of $\bfX$, and the orthogonal matrices $\bfU_{\bfX} = [\bfu_1, \bfu_2, \ldots, \bfu_n]\in\bbR^{n\times n}$ and $\bfV_{\bfX} = [\bfv_1, \bfv_2, \ldots, \bfv_N]\in\bbR^{N\times N}$ contain the left and right singular vectors $\bfu_i \in \bbR^n$ and $\bfv_j \in \bbR^N$ for $i = 1, \ldots, n$ and $j=1,\ldots,N$, respectively.  We will continue to use the truncated SVD notation introduced in \Cref{eq: svd}. 
From \cite{chung2017optimal}, for positive integer $r\leq k$, the minimizer $\widehat \bfY$ is given by
\begin{align*}
\widehat \bfY &= \left(\bfX \bfV_{\bfX,k} \bfV_{\bfX,k}\t\right)_r \bfV_{\bfX} \bfSigma_{\bfX}^{\dagger} \bfU_{\bfX}\t
= \left( \bfU_{\bfX} \bfSigma_{\bfX} 
    \begin{bmatrix}
        \bfI_k\\
        \bfzero
    \end{bmatrix}
    \bfV_{\bfX,k}\t \right)_r \bfV_{\bfX} \bfSigma_{\bfX}^{\dagger} \bfU_{\bfX}\t \\
&= \bfU_{\bfX,r} \bfSigma_{\bfX,r}\bfV_{\bfX,r}\t \bfV_{\bfX} \bfSigma_{\bfX}^{\dagger} \bfU_{\bfX}\t
= \bfU_{\bfX,r} \bfSigma_{\bfX,r} 
    \begin{bmatrix}
        \bfI_r & \bf0
    \end{bmatrix} \bfSigma_\bfX^\dagger \bfU_\bfX\t 
=\bfU_{\bfX,r} \bfU_{\bfX,r}\t.
\end{align*} 
Thus, one optimal choice of the linear encoder and decoder in an empirical Bayes risk sense is $\widehat\bfE=\bfU_{\bfX,r}\t$ and $\widehat\bfD=\bfU_{\bfX,r}$, respectively.  Again, the decomposition $\widehat \bfY = \widehat \bfD \widehat \bfE$ is not unique since 
\begin{equation}
\label{eq:empirical ae}
    \widehat\bfE=\bfK^{-1} \bfU_{\bfX,r}\t \quad \text{ and } \quad \widehat\bfD=\bfU_{\bfX,r} \bfK
\end{equation}
satisfies \Cref{eq:empBayesae} for any invertible $r\times r$ matrix $\bfK$.

Next, we discuss how both the Bayes risk and empirical Bayes risk interpretations of linear autoencoders can be integrated within the PAIR framework.

\subsection{Theoretical investigations for fully linear PAIR}\label{sub:theory_linmap}
In this section, we consider the special case where the autoencoder mappings $\Phi_{\rm ae}^\bfx$ and $\Phi_{\rm ae}^\bfb$ and the latent space mappings $\bfm$ and $\bfm^\dagger$ are all linear.
We describe how the resulting Bayes risk and empirical Bayes risk minimization problems lead to interesting numerical linear algebra problems, and we provide closed-form solutions to the low-rank approximation problems.

\subsubsection{Bayes risk minimization} 
\label{subsubsection: linear latent maps Bayes risk}
For random variable $X$ with finite first moment and SPD second moment $\bfGamma_\bfx=\bfL_\bfx\bfL_\bfx\t$, let us assume we have a linear autoencoder with encoder $\bfE_\bfx \in \bbR^{r_\bfx \times n}$ and decoder $\bfD_\bfx\in \bbR^{n \times r_\bfx}$ with $r_\bfx \leq n$.  This defines a latent variable $Z_\bfx=\bfE_\bfx X$. In the PAIR framework, this autoencoder for the target space is paired with another autoencoder for the input space, where an optimal mapping connects their latent spaces. 

Let random variable $B$ be related to $X$ by $B=\bfA X + \eps$, from \Cref{eq:inverseproblem} with linear forward model. Assume that the noise $\eps$ is independent of $X$ with mean $\bbE(\eps)=\bf0$ and SPD covariance $\bbE(\eps\eps\t)=\bfGamma_\eps=\bfL_\eps\bfL_\eps\t$, then $B$ has SPD second moment $\bfGamma_\bfb = \bfA \bfGamma_\bfx \bfA\t + \bfGamma_\eps$, and factorization $\bfGamma_\bfb=\bfL_\bfb \bfL_\bfb\t$.  Assume that we also have a linear autoencoder for $B$, with encoder $\bfE_\bfb\in\bbR^{r_\bfb \times q}$ and decoder $\bfD_\bfb\in\bbR^{q\times r_\bfq}$ with $r_\bfb \leq q$, and define the latent input variable $Z_\bfb=\bfE_\bfb B=\bfE_\bfb (\bfA X + \eps)$.  

\begin{theorem} \label{thm:linmap}
Let $X$ and $B$ be the random variables defined at the beginning of this section with linear autoencoders $\bfD_\bfx\bfE_\bfx$ and $\bfD_\bfb\bfE_\bfb$.  

If $\bfE_\bfx$ has full row-rank, then the optimal linear forward mapping between latent spaces with minimal norm is given by
\begin{equation}\label{eq:linmap}
    \widetilde\bfM = \bfE_\bfb \bfA \bfGamma_\bfx \bfE_\bfx\t \left(\bfE_\bfx \bfGamma_\bfx \bfE_\bfx\t\right)^{-1} \in \argmin_\bfM \ \bbE \norm[2]{\bfM Z_\bfx - \bfE_\bfb \bfA X}^2.
    \end{equation}
    
If $\bfE_\bfb$ has full row-rank, then the optimal linear inverse map between latent spaces with minimal norm is given by
\begin{equation}\label{eq:linmapInv}
    \widetilde\bfM^\dagger = \bfE_\bfx \bfGamma_\bfx\t \bfA\t \bfE_\bfb\t \left(\bfE_\bfb \bfGamma_\bfb \bfE_\bfb\t \right)^{-1} \in \argmin_{\bfM^\dagger} \ \bbE \norm[2]{\bfM^\dagger Z_\bfb - Z_\bfx }^2.
\end{equation}
\end{theorem}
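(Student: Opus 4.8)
The plan is to recognize both claims as instances of a single elementary fact about best linear predictors and then carry out the (slightly different) second-moment bookkeeping in each case. The fact is: if $U$ and $V$ are jointly distributed random vectors with finite second moments and $\bbE(UU\t)$ invertible, then
\[
\bbE\norm[2]{\bfM U - V}^2 = \trace{\bfM\,\bbE(UU\t)\,\bfM\t} - 2\,\trace{\bbE(VU\t)\,\bfM\t} + \bbE\norm[2]{V}^2 ,
\]
which is a quadratic in $\bfM$ whose purely quadratic part is a positive-definite form (since $\bbE(UU\t)$ is an invertible second-moment matrix, hence SPD); thus the objective is strictly convex and coercive, so it has a unique minimizer, found by setting the gradient to zero, namely $\bfM^\star = \bbE(VU\t)\,\bbE(UU\t)^{-1}$. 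Being the unique minimizer, $\bfM^\star$ is in particular the minimal-norm one, which is what the statement asks for. It remains only to identify $U$, $V$ and the corresponding moments.

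For the forward map, take $U = Z_\bfx = \bfE_\bfx X$ and $V = \bfE_\bfb\bfA X$. Writing $\bfGamma_\bfx = \bfL_\bfx\bfL_\bfx\t$, we get $\bbE(UU\t) = \bfE_\bfx\bfGamma_\bfx\bfE_\bfx\t = (\bfE_\bfx\bfL_\bfx)(\bfE_\bfx\bfL_\bfx)\t$, which is SPD precisely because $\bfE_\bfx$ has full row rank (so $\bfE_\bfx\bfL_\bfx$ does too), and $\bbE(VU\t) = \bfE_\bfb\bfA\,\bbE(XX\t)\,\bfE_\bfx\t = \bfE_\bfb\bfA\bfGamma_\bfx\bfE_\bfx\t$; substituting into $\bfM^\star = \bbE(VU\t)\,\bbE(UU\t)^{-1}$ reproduces \eqref{eq:linmap}. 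For the inverse map, take $U = Z_\bfb = \bfE_\bfb(\bfA X + \eps)$ and $V = Z_\bfx = \bfE_\bfx X$; the only genuinely new ingredient is handling the noise. Using that $\eps$ is independent of $X$ with $\bbE(\eps) = \bfzero$ and that $X$ has a finite first moment, every mixed term carrying one factor $\eps$ and one factor $X$ vanishes --- e.g. $\bbE(X\eps\t) = \bbE(X)\,\bbE(\eps)\t = \bfzero$ --- so $\bbE\big((\bfA X + \eps)(\bfA X + \eps)\t\big) = \bfA\bfGamma_\bfx\bfA\t + \bfGamma_\eps = \bfGamma_\bfb$, giving $\bbE(UU\t) = \bfE_\bfb\bfGamma_\bfb\bfE_\bfb\t$, which is SPD because $\bfE_\bfb$ has full row rank and $\bfGamma_\bfb$ is SPD; likewise $\bbE(VU\t) = \bfE_\bfx\,\bbE\big(X(\bfA X + \eps)\t\big)\,\bfE_\bfb\t = \bfE_\bfx\bfGamma_\bfx\bfA\t\bfE_\bfb\t$. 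Substituting, and using $\bfGamma_\bfx = \bfGamma_\bfx\t$, yields \eqref{eq:linmapInv}.

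I do not expect a serious obstacle: once the structure above is in place the argument is routine. The one place that deserves care is the inverse-map moment computation --- keeping the cross terms straight and invoking independence and zero mean of the noise at the right moments --- together with checking that full row rank of the relevant encoder ($\bfE_\bfx$ for the forward map, $\bfE_\bfb$ for the inverse map) is exactly what is needed for invertibility of $\bfE_\bullet\bfGamma_\bullet\bfE_\bullet\t$, hence for strict convexity and uniqueness. An equivalent route, mirroring how the autoencoder results of \Cref{sub:theory_autoencoder} were derived, is to rewrite the objective as a weighted linear least-squares problem --- $\min_\bfM\norm[\fro]{\bfM\bfE_\bfx\bfL_\bfx - \bfE_\bfb\bfA\bfL_\bfx}^2$ for the forward map, and the analogue with a noise-augmented right-hand side and weight for the inverse map --- and to read the closed form off the normal equations.
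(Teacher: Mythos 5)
Your proposal is correct and follows essentially the same route as the paper: both reduce each claim to a quadratic least-squares problem in the latent-map matrix, carry out the same second-moment bookkeeping (in particular $\bbE(BX\t)=\bfA\bfGamma_\bfx$ via independence and zero mean of the noise), and read the closed form off the normal equations / pseudo-inverse, with the full row-rank hypotheses supplying invertibility of $\bfE_\bfx\bfGamma_\bfx\bfE_\bfx\t$ and $\bfE_\bfb\bfGamma_\bfb\bfE_\bfb\t$. Your observation that under these hypotheses the quadratic form $\trace{\bfM\,\bbE(UU\t)\,\bfM\t}$ is positive definite, so the minimizer is unique and the ``minimal norm'' qualification is automatic, is a small but accurate sharpening of the paper's presentation, which first writes down the minimal-norm solution of a possibly rank-deficient least-squares problem and only then specializes using the rank assumption.
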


\begin{proof}
For $\widetilde\bfM$, we can use properties of the two-norm, trace, and expectation to rewrite the objective function in \Cref{eq:linmap} as 
$\norm[\fro]{(\bfM\bfE_\bfx-\bfE_\bfb\bfA)\bfL_\bfx}^2$,
following similar steps used to reformulate \Cref{eq:linauto_lr} as (\ref{eq:lr_opt}). 
The solution to this least squares problem with minimal norm is $\widetilde\bfM = \bfE_\bfb\bfA\bfL_\bfx(\bfE_\bfx\bfL_\bfx)^\dagger$. Note that $\bfE_\bfx$ has full row rank, so by the definition of the Moore-Penrose pseudo-inverse,
\begin{align*}
    \widetilde \bfM &= \bfE_\bfb\bfA\bfL_\bfx\bfL_\bfx\t\bfE_\bfx\t(\bfE_\bfx\bfL_\bfx\bfL_\bfx\t\bfE_\bfx\t)^{-1}
    = \bfE_\bfb\bfA\bfGamma_\bfx\bfE_\bfx\t(\bfE_\bfx\bfGamma_\bfx\bfE_\bfx\t)^{-1}.
\end{align*}

For $\widetilde\bfM^\dagger$, we can use properties of the two-norm, expectation, and trace to rewrite the objective function in \Cref{eq:linmapInv} as
\begin{align*}
    \trace{\bfM^\dagger \bfE_\bfb \bbE(BB\t) \bfE_\bfb\t (\bfM^\dagger)\t}
    - 2 \trace{\bfM^\dagger \bfE_\bfb \bbE(BX\t) \bfE_\bfx\t} + \trace{\bfE_\bfx \bbE(XX\t) \bfE_\bfx\t}.
\end{align*}
Note that $X$ and $\eps$ are independent, with $\bbE(\eps)=\bf0$ and finite first moment of $X$, so 
$$\bbE(BX\t) = \bbE((\bfA X+\eps)X\t) = \bfA \bbE(XX\t) + \bbE(\eps)\bbE(X\t) = \bfA \bfGamma_\bfx.$$
Substituting this, and the other second moments defined in the beginning of this subsection, the objective function becomes
\begin{align*}
\norm[\fro]{\bfM^\dagger \bfE_\bfb \bfA \bfL_\bfx - \bfE_\bfx \bfL_\bfx}^2 + \norm[\fro]{\bfM^\dagger \bfE_\bfb \bfL_\eps}^2 = \norm[\fro]{\bfM^\dagger \begin{bmatrix}
    \bfE_\bfb \bfA \bfL_\bfx & \bfE_\bfb \bfL_\eps
\end{bmatrix} - \begin{bmatrix}
    \bfE_\bfx\bfL_\bfx & \bfzero
\end{bmatrix}}^2.
\end{align*}
The optimal solution to this least squares problem with minimal norm is given by $\bfM^\dagger = \begin{bmatrix}\bfE_\bfx\bfL_\bfx&\bf0\end{bmatrix}\begin{bmatrix}\bfE_\bfb\bfA\bfL_\bfx & \bfE_\bfb \bfL_\eps\end{bmatrix}^\dagger.$  Note that even when $\bfA$ is rank-deficient, $\begin{bmatrix}\bfE_\bfb\bfA\bfL_\bfx & \bfE_\bfb \bfL_\eps\end{bmatrix}$ has full row rank because $\bfE_\bfb\bfL_\eps$ is necessarily full rank, and thus we can further simplify using the definition of the Moore-Penrose pseudo-inverse,
\begin{align*}
    \widetilde\bfM^\dagger &= \begin{bmatrix}\bfE_\bfx\bfL_\bfx&\bf0\end{bmatrix}\begin{bmatrix}\bfL_\bfx\t \bfA\t \bfE_\bfb\t \\ \bfL_\eps\t \bfE_\bfb\t\end{bmatrix}\left(\begin{bmatrix}\bfE_\bfb\bfA\bfL_\bfx & \bfE_\bfb \bfL_\eps\end{bmatrix}\begin{bmatrix}\bfL_\bfx\t \bfA\t \bfE_\bfb\t \\ \bfL_\eps\t \bfE_\bfb\t\end{bmatrix}\right)^{-1} \\
    &=\bfE_\bfx \bfGamma_\bfx \bfA\t \bfE_\bfb\t \left(\bfE_\bfb \bfA \bfGamma_\bfx\t \bfA\t \bfE_\bfb\t + \bfE_\bfb \bfGamma_\eps\t \bfE_\bfb\t \right)^{-1}
    =\bfE_\bfx \bfGamma_\bfx \bfA\t \bfE_\bfb\t \left(\bfE_\bfb \bfGamma_\bfb \bfE_\bfb\t \right)^{-1}.
\end{align*}

\end{proof}

We next describe the full PAIR framework with optimal linear autoencoders and optimal linear latent maps.  For optimal linear autoencoders for $X$ and $B$, we have encoders $\widetilde\bfE_\bfx,\widetilde\bfE_\bfb$ and decoders $\widetilde\bfD_\bfx,\widetilde\bfD_\bfb$ according to \Cref{eq:aes}, i.e.,
\begin{equation}
\begin{aligned}\label{eq:optencodedecode}
    &\widetilde \bfE_\bfx = \bfK_\bfx^{-1} \bfU_{\bfL_\bfx,r_\bfx}^\top,
    &\widetilde \bfE_\bfb = \bfK_\bfb^{-1} \bfU_{\bfL_\bfb,r_\bfb}^\top,\\
    &\widetilde \bfD_\bfx = \bfU_{\bfL_\bfx,r_\bfx} \bfK_\bfx,
    &\widetilde \bfD_\bfb = \bfU_{\bfL_\bfb,r_\bfb} \bfK_\bfb,
\end{aligned}
\end{equation}
where the first subscript denotes which matrix an SVD factor belongs to and the second denotes where it is truncated (e.g.,  $\bfU_{\bfL_\bfx, r_\bfx}$ denotes the first $r_\bfx$ left singular vectors of $\bfL_\bfx$). We provide the following for the fully linear PAIR surrogates.
\begin{theorem}
\label{thm:pair}
Consider the linear autoencoders given in \Cref{eq:optencodedecode} and the linear mappings between latent spaces given in \Cref{thm:linmap} (Equations \ref{eq:linmap} and \ref{eq:linmapInv}), that are optimal in the Bayes risk minimization sense.  Let the linear PAIR forward surrogate be defined as $\widetilde\bfP = \widetilde\bfD_\bfb \widetilde \bfM \widetilde\bfE_\bfx$. Then
    \begin{equation}
    \label{eq:PAIR_forward}
    \widetilde\bfP = \bfU_{\bfL_\bfb, r_\bfb}\bfU_{\bfL_\bfb, r_\bfb}\t \bfA \bfU_{\bfL_\bfx, r_\bfx}\bfU_{\bfL_\bfx, r_\bfx}\t.\end{equation}  
Let the linear PAIR inverse surrogate be defined as $\widetilde\bfP^{\dagger} = \widetilde\bfD_\bfx \widetilde\bfM^\dagger \widetilde\bfE_\bfb$. Then
    \begin{equation}
    \label{eq:PAIR-inverse}
     \widetilde\bfP^{\dagger} = {\bfU_{\bfL_\bfx, r_\bfx}}\bfSigma_{\bfL_\bfx, r_\bfx}^2 \bfU_{\bfL_{r_\bfx}}\t \bfA\t {\bfU_{\bfL_\bfb, r_\bfb}} \bfSigma_{\bfL_\bfb, r_\bfb}^{-2} {\bfU_{\bfL_\bfb, r_\bfb}\t}.
    \end{equation}
\end{theorem}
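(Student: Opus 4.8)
The plan is to substitute the optimal linear encoders and decoders from \cref{eq:optencodedecode} into the optimal latent-space maps supplied by \Cref{thm:linmap}, compose the PAIR surrogates, observe that the arbitrary invertible conditioning matrices $\bfK_\bfx,\bfK_\bfb$ cancel, and then collapse the remaining expressions using the spectral decomposition of the second-moment matrices. Throughout I abbreviate $\bfU_\bfx := \bfU_{\bfL_\bfx,r_\bfx}$ and $\bfU_\bfb := \bfU_{\bfL_\bfb,r_\bfb}$.

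First I would verify that \Cref{thm:linmap} is applicable to the chosen encoders: $\widetilde\bfE_\bfx = \bfK_\bfx^{-1}\bfU_\bfx\t$ is an invertible $r_\bfx\times r_\bfx$ matrix times a matrix with orthonormal rows, hence has full row rank $r_\bfx$, and likewise $\widetilde\bfE_\bfb$ has full row rank $r_\bfb$. For the forward surrogate I would plug $\bfE_\bfx=\widetilde\bfE_\bfx$ and $\bfE_\bfb=\widetilde\bfE_\bfb$ into \cref{eq:linmap}. The key bookkeeping point is that inverting $\widetilde\bfE_\bfx\bfGamma_\bfx\widetilde\bfE_\bfx\t = \bfK_\bfx^{-1}(\bfU_\bfx\t\bfGamma_\bfx\bfU_\bfx)\bfK_\bfx^{-\top}$ produces the factor $\bfK_\bfx\t(\bfU_\bfx\t\bfGamma_\bfx\bfU_\bfx)^{-1}\bfK_\bfx$, whose leading $\bfK_\bfx\t$ exactly annihilates the trailing $\bfK_\bfx^{-\top}$ of $\widetilde\bfE_\bfb\bfA\bfGamma_\bfx\widetilde\bfE_\bfx\t$; this gives $\widetilde\bfM = \bfK_\bfb^{-1}\bfU_\bfb\t\bfA\bfGamma_\bfx\bfU_\bfx(\bfU_\bfx\t\bfGamma_\bfx\bfU_\bfx)^{-1}\bfK_\bfx$. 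Composing with $\widetilde\bfD_\bfb = \bfU_\bfb\bfK_\bfb$ on the left and $\widetilde\bfE_\bfx = \bfK_\bfx^{-1}\bfU_\bfx\t$ on the right removes all four $\bfK$-factors, so $\widetilde\bfP = \bfU_\bfb\bfU_\bfb\t\bfA\,\bfGamma_\bfx\bfU_\bfx(\bfU_\bfx\t\bfGamma_\bfx\bfU_\bfx)^{-1}\bfU_\bfx\t$. The final step uses that $\bfL_\bfx = \bfU_{\bfL_\bfx}\bfSigma_{\bfL_\bfx}\bfV_{\bfL_\bfx}\t$ gives $\bfGamma_\bfx = \bfL_\bfx\bfL_\bfx\t = \bfU_{\bfL_\bfx}\bfSigma_{\bfL_\bfx}^2\bfU_{\bfL_\bfx}\t$, so the columns of $\bfU_\bfx$ are eigenvectors of $\bfGamma_\bfx$: $\bfGamma_\bfx\bfU_\bfx = \bfU_\bfx\bfSigma_{\bfL_\bfx,r_\bfx}^2$ and hence $\bfU_\bfx\t\bfGamma_\bfx\bfU_\bfx = \bfSigma_{\bfL_\bfx,r_\bfx}^2$, which is invertible because $\bfGamma_\bfx$ is SPD. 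Substituting yields $\bfGamma_\bfx\bfU_\bfx(\bfU_\bfx\t\bfGamma_\bfx\bfU_\bfx)^{-1}\bfU_\bfx\t = \bfU_\bfx\bfU_\bfx\t$, which establishes \cref{eq:PAIR_forward}.

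For the inverse surrogate I would run the same argument on \cref{eq:linmapInv}. Using $\bfGamma_\bfx\t = \bfGamma_\bfx$ and the analogous $\bfK$-cancellation (now inverting $\widetilde\bfE_\bfb\bfGamma_\bfb\widetilde\bfE_\bfb\t$), one gets $\widetilde\bfM^\dagger = \bfK_\bfx^{-1}\bfU_\bfx\t\bfGamma_\bfx\bfA\t\bfU_\bfb(\bfU_\bfb\t\bfGamma_\bfb\bfU_\bfb)^{-1}\bfK_\bfb$, and then $\widetilde\bfP^\dagger = \widetilde\bfD_\bfx\widetilde\bfM^\dagger\widetilde\bfE_\bfb = \bfU_\bfx\bfU_\bfx\t\bfGamma_\bfx\bfA\t\bfU_\bfb(\bfU_\bfb\t\bfGamma_\bfb\bfU_\bfb)^{-1}\bfU_\bfb\t$. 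Finally, $\bfU_\bfx\t\bfGamma_\bfx = \bfSigma_{\bfL_\bfx,r_\bfx}^2\bfU_\bfx\t$ and $\bfU_\bfb\t\bfGamma_\bfb\bfU_\bfb = \bfSigma_{\bfL_\bfb,r_\bfb}^2$ reduce this to $\widetilde\bfP^\dagger = \bfU_{\bfL_\bfx,r_\bfx}\bfSigma_{\bfL_\bfx,r_\bfx}^2\bfU_{\bfL_\bfx,r_\bfx}\t\bfA\t\bfU_{\bfL_\bfb,r_\bfb}\bfSigma_{\bfL_\bfb,r_\bfb}^{-2}\bfU_{\bfL_\bfb,r_\bfb}\t$, which is \cref{eq:PAIR-inverse}.

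There is no conceptual obstacle here; the proof is a direct, if notation-dense, computation. The only two points that need care are: (i) tracking that the copy of the conditioning matrix produced when inverting $\widetilde\bfE\bfGamma\widetilde\bfE\t$ is $\bfK\t(\cdot)^{-1}\bfK$ rather than $\bfK^{-1}(\cdot)\bfK^{-\top}$, so that it precisely cancels the adjacent $\bfK^{-\top}$ and the decoder's $\bfK$; and (ii) confirming the middle matrices $\bfU_\bfx\t\bfGamma_\bfx\bfU_\bfx$ and $\bfU_\bfb\t\bfGamma_\bfb\bfU_\bfb$ are invertible, which holds because $\bfGamma_\bfx$ and $\bfGamma_\bfb$ being SPD forces the leading singular values of $\bfL_\bfx$ and $\bfL_\bfb$ to be strictly positive.
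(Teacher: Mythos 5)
Your proposal is correct and follows essentially the same route as the paper's proof: substitute the optimal encoders/decoders into the latent maps of \Cref{thm:linmap}, cancel the $\bfK_\bfx,\bfK_\bfb$ factors, and collapse $\bfGamma_\bfx\bfU_{\bfL_\bfx,r_\bfx}(\bfU_{\bfL_\bfx,r_\bfx}\t\bfGamma_\bfx\bfU_{\bfL_\bfx,r_\bfx})^{-1}\bfU_{\bfL_\bfx,r_\bfx}\t$ and the analogous inverse-side expression using the eigendecompositions of $\bfGamma_\bfx$ and $\bfGamma_\bfb$. Your added checks on the encoders' full row rank and the invertibility of the middle blocks are sound and only make the argument slightly more explicit than the paper's.
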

Before proving this result, we make a few remarks. First, note that although each component of $\widetilde\bfP$ and $\widetilde\bfP^\dagger$ is optimal in a Bayes risk minimization sense, $\widetilde\bfP$ and $\widetilde\bfP^\dagger$ themselves do not necessarily minimize 
$\bbE\|\bfP X - B\|_2^2$ and $\bbE\|\bfP^\dagger B - X\|_2^2,$
respectively.
Second, we can interpret the mappings in \Cref{thm:pair} as fully data-driven, since within \Cref{eq:PAIR_forward,eq:PAIR-inverse} the operator $\bfA$ remains untouched, i.e., no rank constraint is imposed directly on $\bfA$. The matrices $\bfU_{\bfL_\bfx, r_\bfx}$ and  $\bfU_{\bfL_\bfb, r_\bfb}$, on the other hand, provide projections from and onto relevant lower-dimensional subspaces defined through the distributions of the random variables $X$ and $B$.  

\begin{proof}[Proof of \Cref{thm:pair}]
Using \Cref{eq:linmap}, through substitution of our optimal encoder and decoder choices and the SVD $\bfGamma_\bfx=\bfU_{\bfL_\bfx} \bfSigma_{\bfL_\bfx}^2 \bfU_{\bfL_\bfx}\t$, we find $\widetilde\bfP$ is equivalently
\begin{align*}
&\bfU_{\bfL_\bfb, r_\bfb}\bfK_\bfb
\bfK_\bfb^{-1} \bfU_{\bfL_\bfb,r_\bfb}\t \bfA \bfGamma_\bfx \bfU_{\bfL_\bfx, r_\bfx} \bfK_\bfx^{-\top}\left( \bfK_\bfx^{-1} \bfU_{\bfL_\bfx,r_\bfx}^\top \bfGamma_\bfx  \bfU_{\bfL_\bfx,r_\bfx}\bfK_\bfx^{-\top}\right)^{-1}
\bfK_\bfx^{-1} \bfU_{\bfL_\bfx, r_\bfx}\t\\
&= \bfU_{\bfL_\bfb, r_\bfb}\bfU_{\bfL_\bfb,r_\bfb}\t \bfA \bfU_{\bfL_\bfx}\bfSigma_{\bfL_\bfx}^2 \begin{bmatrix}\bfI_{r_\bfx}\\\bf0\end{bmatrix}  \left(\begin{bmatrix}\bfI_{r_\bfx}&\bf0\end{bmatrix} \bfSigma_{\bfL_\bfx}^2\begin{bmatrix}\bfI_{r_\bfx}\\\bf0\end{bmatrix} \right)^{-1} \bfU_{\bfL_\bfx, r_\bfx}\t\\
&= \bfU_{\bfL_\bfb, r_\bfb} \bfU_{\bfL_\bfb,r_\bfb}\t \bfA \bfU_{\bfL_\bfx, r_\bfx} \bfSigma_{\bfL_\bfx,r_\bfx}^2 \bfSigma_{\bfL_\bfx,r_\bfx}^{-2} \bfU_{\bfL_\bfx, r_\bfx}\t 
= \bfU_{\bfL_\bfb, r_\bfb} \bfU_{\bfL_\bfb,r_\bfb}\t \bfA \bfU_{\bfL_\bfx, r_\bfx} \bfU_{\bfL_\bfx, r_\bfx}\t.
\end{align*}
Using \Cref{eq:linmapInv}, through substitution of our encoder and decoder choices, we find
\begin{align*}
    \widetilde\bfP^\dagger
        &= {\bfU_{\bfL_\bfx, r_\bfx}} {\bfU_{\bfL_\bfx, r_\bfx}\t} \bfGamma_\bfx \bfA\t {\bfU_{\bfL_\bfb, r_\bfb}} {\left(\bfU_{\bfL_\bfb, r_\bfb}\t \bfGamma_\bfb \bfU_{\bfL_\bfb, r_\bfb}\right)^{-1}} {\bfU_{\bfL_\bfb, r_\bfb}\t}.
    \end{align*}
    Next, we substitute the SVDs of $\bfGamma_\bfx$ and $\bfGamma_\bfb$, and obtain
    \begin{align*}
        {\left(\bfU_{\bfL_\bfb, r_\bfb}\t \bfGamma_\bfb \bfU_{\bfL_\bfb, r_\bfb}\right)^{-1}} &= {\left(\bfU_{\bfL_\bfb, r_\bfb}\t \bfU_{\bfL_\bfb} \bfSigma_{\bfL_\bfb}^2 \bfU_{\bfL_\bfb}\t \bfU_{\bfL_\bfb, r_\bfb}\right)^{-1}}
        = \bfSigma_{\bfL_\bfb, r_\bfb}^{-2},\\
    \widetilde\bfP^\dagger
         &= {\bfU_{\bfL_\bfx, r_\bfx}}\bfSigma_{\bfL_\bfx, r_\bfx}^2 \bfU_{\bfL_\bfx, {r_\bfx}}\t \bfA\t {\bfU_{\bfL_\bfb, r_\bfb}} \bfSigma_{\bfL_\bfb, r_\bfb}^{-2} {\bfU_{\bfL_\bfb, r_\bfb}\t}.
    \end{align*}
\end{proof}

One feature of the PAIR method defined in the Bayes risk sense is that, without compression, it exactly recovers the forward map $\bfA$.  That is, when $r_\bfx=n$ and $r_\bfb=q$, \Cref{eq:PAIR_forward} yields $$\widetilde\bfP = \bfU_{\bfL_\bfb} \bfU_{\bfL_\bfb}\t \bfA \bfU_{\bfL_\bfx } \bfU_{\bfL_\bfx}\t = \bfA,$$
because both $\bfU_{\bfL_\bfx}$ and $\bfU_{\bfL_\bfb}$ are orthogonal. When $\bfA$ is invertible and $B$ is noiseless, this follows analogously with $\widetilde\bfP^\dagger = \bfA^{-1}$.  In general, when $B$ is defined with noise, we do not recover $\bfA^{-1}$, even when $\bfA$ is invertible.  From \Cref{eq:PAIR-inverse},  when $r_\bfx=n$, and $r_\bfb=q$, 
$$\widetilde\bfP^\dagger = {\bfU_{\bfL_\bfx}}\bfSigma_{\bfL_\bfx}^2 \bfU_{\bfL_\bfx}\t \bfA\t {\bfU_{\bfL_\bfb}} \bfSigma_{\bfL_\bfb}^{-2} {\bfU_{\bfL_\bfb}\t} = \bfGamma_\bfx \bfA\t \bfGamma_\bfb^{-1}.$$

Next, we turn to the case where we are given samples or training data.
Using an empirical Bayes risk minimization approach, we obtain approximations of the optimal linear latent space mappings and PAIR surrogate forward and inverse approximations.

\subsubsection{Empirical Bayes risk minimization}
\label{sub:empBayes latent maps}
When working with samples, one approach to construct linear latent maps is to calculate sample second moment matrices $\widebar\bfGamma_\bfb=\widebar\bfL_\bfb\widebar\bfL_\bfb\t$ and $\widebar\bfGamma_\bfx=\widebar\bfL_\bfx\widebar\bfL_\bfx\t$ and to use these approximations to form $\bfM$ and $\bfM^\dagger$ following the definitions in \Cref{eq:linmap,eq:linmap}.  However, this approach has some disadvantages; to define $\bfM$, the full-scale forward map $\bfA$ is assumed to be linear and to define $\bfM^\dagger$ requires linear $\bfA$ with access to $\bfA\t$.  Another disadvantage is that the sample second moment matrix is typically hard to approximate in higher dimensions and may not be SPD with only a small number of samples.

Thus, we consider an alternative approach to form mappings that does not require an explicit or linear forward mapping $\bfA$.  Let us assume we are given realizations $\bfx_1,\ldots, \bfx_{N} \in \bbR^{n}$ of random variable $X$, and realizations $\bfb_1,\ldots, \bfb_{Q} \in \bbR^{q}$ of random variable $B$.  Let $\bfX = [\bfx_1,\ldots, \bfx_N] \in \bbR^{n \times N}$ and $\bfB = [\bfb_1,\ldots, \bfb_{Q}] \in \bbR^{q \times Q}$. 

Consider a PAIR network with linear autoencoders, with encoders $\bfE_\bfx \in \bbR^{r_\bfx \times n}$ and $\bfE_\bfb\in \bbR^{r_\bfb \times q}$ and decoders $\bfD_\bfx\in \bbR^{n \times r_\bfx}$ and $\bfD_\bfb\in\bbR^{n \times r_\bfb}$ with $r_\bfb\leq q$, $r_\bfx\leq n$.

Matrices $\bfZ_\bfx = \bfE_\bfx \bfX$ and $\bfZ_\bfb = \bfE_\bfb \bfB$ contain latent representations of the realizations (for $\bfX$ and $\bfB$ respectively).
To find optimal linear mappings $\widehat\bfM$ and $\widehat\bfM^\dagger$ between latent spaces, we consider the following optimization problems,
\begin{align*}
      \widehat\bfM \in \argmin_{\bfM} \ \norm[\fro]{\bfM \bfZ_\bfx - \bfZ_\bfb} \quad \text{ and } \quad \widehat\bfM^\dagger \in \argmin_{\bfM^\dagger} \ \norm[\fro]{\bfM^\dagger \bfZ_\bfb - \bfZ_\bfx}.
\end{align*}
The solution with minimal norm is given by the Moore-Penrose pseudo-inverse, 
\begin{equation}\label{eq:empirical m and mdagger}
    \widehat\bfM = \bfZ_\bfb \bfZ_\bfx^\dagger \quad \text{ and } \quad \widehat\bfM^\dagger = \bfZ_\bfx \bfZ_\bfb^\dagger.
\end{equation} 
Assuming $\bfZ_\bfx$ and $\bfZ_\bfb$ have full row rank, equivalently, $\widehat\bfM = \bfZ_\bfb \bfZ_\bfx\t(\bfZ_\bfx\bfZ_\bfx\t)^{-1}$ and $\widehat\bfM^\dagger = \bfZ_\bfx \bfZ_\bfb\t(\bfZ_\bfb \bfZ_\bfb\t)^{-1}$.

\begin{proposition}
In the empirical Bayes risk minimization sense, if optimal linear autoencoders with $\widehat\bfE_\bfx$, $\widehat\bfE_\bfb$, $\widehat\bfD_\bfx$, and $\widehat\bfD_\bfb$ as defined by \Cref{eq:empirical ae} are used with $\sigma_{r_\bfx}(\bfX)>0$, then the PAIR forward surrogate $\widehat\bfP=\widehat\bfD_\bfb \widehat\bfM \widehat\bfE_\bfx$ is given by 

\begin{equation}
\label{eq:PAIRempbayes}
    \widehat\bfP =\bfU_{\bfB,r_\bfb} \bfSigma_{\bfB,r_\bfb} \bfV_{\bfB,r_\bfb}\t \bfV_{\bfX,r_\bfx} \bfSigma_{\bfX,r_\bfx}^{-1} \bfU_{\bfX,r_\bfx}\t= \bfB_{r_\bfb} \bfX_{r_\bfx}^\dagger.
\end{equation}
\end{proposition}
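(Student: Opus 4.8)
The plan is to substitute the closed-form optimal encoders and decoders from \Cref{eq:empirical ae} and the latent map $\widehat\bfM = \bfZ_\bfb\bfZ_\bfx^\dagger$ from \Cref{eq:empirical m and mdagger} into $\widehat\bfP = \widehat\bfD_\bfb\widehat\bfM\widehat\bfE_\bfx$, express everything through the SVDs $\bfX = \bfU_\bfX\bfSigma_\bfX\bfV_\bfX\t$ and $\bfB = \bfU_\bfB\bfSigma_\bfB\bfV_\bfB\t$, and observe that the arbitrary invertible matrices $\bfK_\bfx,\bfK_\bfb$ cancel in pairs, exactly as in the proof of \Cref{thm:pair}.

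First I would rewrite the latent data matrices. Since $\widehat\bfE_\bfx = \bfK_\bfx^{-1}\bfU_{\bfX,r_\bfx}\t$ and $\bfU_{\bfX,r_\bfx}\t\bfU_\bfX = \begin{bmatrix}\bfI_{r_\bfx} & \bfzero\end{bmatrix}$, the rectangular shape of $\bfSigma_\bfX$ gives $\bfU_{\bfX,r_\bfx}\t\bfX = \bfSigma_{\bfX,r_\bfx}\bfV_{\bfX,r_\bfx}\t$, hence $\bfZ_\bfx = \widehat\bfE_\bfx\bfX = \bfK_\bfx^{-1}\bfSigma_{\bfX,r_\bfx}\bfV_{\bfX,r_\bfx}\t$, and likewise $\bfZ_\bfb = \bfK_\bfb^{-1}\bfSigma_{\bfB,r_\bfb}\bfV_{\bfB,r_\bfb}\t$. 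The hypothesis $\sigma_{r_\bfx}(\bfX) > 0$ makes $\bfSigma_{\bfX,r_\bfx}$ invertible, so $\bfZ_\bfx$ is the product of the invertible $r_\bfx\times r_\bfx$ matrix $\bfK_\bfx^{-1}\bfSigma_{\bfX,r_\bfx}$ with the matrix $\bfV_{\bfX,r_\bfx}\t$ having orthonormal rows; thus $\bfZ_\bfx$ has full row rank, and a short check of the four Moore-Penrose conditions (or the identity $\bfZ_\bfx^\dagger = \bfZ_\bfx\t(\bfZ_\bfx\bfZ_\bfx\t)^{-1}$) gives $\bfZ_\bfx^\dagger = \bfV_{\bfX,r_\bfx}\bfSigma_{\bfX,r_\bfx}^{-1}\bfK_\bfx$.

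Finally, I would assemble $\widehat\bfP = (\bfU_{\bfB,r_\bfb}\bfK_\bfb)(\bfK_\bfb^{-1}\bfSigma_{\bfB,r_\bfb}\bfV_{\bfB,r_\bfb}\t)(\bfV_{\bfX,r_\bfx}\bfSigma_{\bfX,r_\bfx}^{-1}\bfK_\bfx)(\bfK_\bfx^{-1}\bfU_{\bfX,r_\bfx}\t)$; cancelling $\bfK_\bfb\bfK_\bfb^{-1}$ and $\bfK_\bfx\bfK_\bfx^{-1}$ leaves the middle expression of \Cref{eq:PAIRempbayes}, and recognizing $\bfU_{\bfB,r_\bfb}\bfSigma_{\bfB,r_\bfb}\bfV_{\bfB,r_\bfb}\t = \bfB_{r_\bfb}$ together with $\bfV_{\bfX,r_\bfx}\bfSigma_{\bfX,r_\bfx}^{-1}\bfU_{\bfX,r_\bfx}\t = \bfX_{r_\bfx}^\dagger$ (again using $\sigma_{r_\bfx}(\bfX) > 0$) yields $\widehat\bfP = \bfB_{r_\bfb}\bfX_{r_\bfx}^\dagger$. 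There is no genuine obstacle here; the only points warranting care are the truncation step (so that $\bfU_{\bfX,r_\bfx}\t\bfX$ collapses to exactly $\bfSigma_{\bfX,r_\bfx}\bfV_{\bfX,r_\bfx}\t$, not a larger block) and the observation that $\bfZ_\bfb$ need not have full row rank, since only $\bfZ_\bfx^\dagger$ enters $\widehat\bfM$ — so $\sigma_{r_\bfx}(\bfX) > 0$ is the sole nondegeneracy hypothesis required, and $\bfB_{r_\bfb}$ may legitimately have rank below $r_\bfb$.
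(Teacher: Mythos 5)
Your argument is correct and follows essentially the same route as the paper's proof: both substitute the optimal encoders/decoders and the minimal-norm latent map $\widehat\bfM=\bfZ_\bfb\bfZ_\bfx^\dagger$, reduce everything to the SVDs of $\bfX$ and $\bfB$, and let the $\bfK_\bfx,\bfK_\bfb$ factors cancel; the only cosmetic difference is that you compute $\bfZ_\bfx^\dagger$ in closed form first, whereas the paper expands $\bfZ_\bfx\t(\bfZ_\bfx\bfZ_\bfx\t)^{-1}$ inline. Your side remarks — that only $\bfZ_\bfx$ needs full row rank (guaranteed by $\sigma_{r_\bfx}(\bfX)>0$) and that $\bfZ_\bfb$ need not — are accurate refinements consistent with the paper.
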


\begin{proof} From \Cref{eq:empirical m and mdagger}, we have 
\begin{align*}
\widehat\bfP &= \widehat\bfD_\bfb \bfZ_\bfb \bfZ_\bfx\t(\bfZ_\bfx\bfZ_\bfx\t)^{-1} \widehat\bfE_\bfx =\widehat\bfD_\bfb \widehat\bfE_\bfb\bfB\bfX\t\widehat\bfE_\bfx\t\left(\widehat\bfE_\bfx\bfX\bfX\t\widehat\bfE_\bfx\t\right)^{-1} \widehat\bfE_\bfx.
\end{align*}
Using \Cref{eq:empirical ae} to define the optimal encoders  and the SVD of $\bfX$, we find
\begin{align*}    \left(\widehat\bfE_\bfx\bfX\bfX\t\widehat\bfE_\bfx\t\right)^{-1}
    &= \left(\bfK_\bfx^{-1}\begin{bmatrix}\bfI_{r_\bfx}&\bf0\end{bmatrix}\bfSigma_\bfX\bfSigma_\bfX\t\begin{bmatrix}\bfI_{r_\bfx}\\\bf0\end{bmatrix}\bfK_\bfx^{-\top}\right)^{-1}
    = \bfK_\bfx^{\top} \bfSigma_{\bfX,r_\bfx}^{-2} \bfK_\bfx\\
\widehat\bfP &=\bfU_{\bfB,r_\bfb} \bfK_\bfb \bfK_\bfb^{-1} \bfU_{\bfB,r_\bfb}\t \bfB \bfX\t \bfU_{\bfX,r_\bfx} \bfK_\bfx^{-\top} \bfK_\bfx^{\top} \bfSigma_{\bfX,r_\bfx}^{-2} \bfK_\bfx \bfK_\bfx^{-1} \bfU_{\bfX,r_\bfx}\t\\
&= \bfU_{\bfB,r_\bfb}\begin{bmatrix}\bfI_{r_\bfb}&\bf0\end{bmatrix}\bfSigma_\bfB\bfV_\bfB\t \bfV_\bfX\bfSigma_\bfX\t\begin{bmatrix}\bfI_{r_\bfx}\\\bf0\end{bmatrix}\bfSigma_{\bfX,r_\bfx}^{-2} \bfU_{\bfX,r_\bfx}\t
=\bfB_{r_\bfb} \bfX_{r_\bfx}^\dagger.
\end{align*}

\end{proof}
With $\sigma_{r_\bfb}(\bfB)>0$, the PAIR inverse surrogate $\widehat\bfP=\widehat\bfD_\bfx\widehat\bfM^\dagger\widehat\bfE_\bfb=\bfX_{r_\bfx}\bfB_{r_\bfb}^\dagger$ follows analogously.

When $\bfB=\bfA\bfX$ (linear, noiseless), and we consider the PAIR forward surrogate without compression ($r_\bfb=q$ and $r_\bfx=n$) defined with optimal empirical Bayes risk choices, we make a note of the following connections to classic linear algebra problems from \Cref{eq:PAIRempbayes}:
\begin{itemize}
    \item If $\bfX$ has full row rank, then 
    $$\widehat\bfP = \bfB \bfX^\dagger = \bfA \bfX \bfX\t (\bfX\bfX\t)^{-1} = \bfA.$$
    \item If $\rank\bfX=k<n$, then 
\begin{align*}
\widehat\bfP = \bfA \bfU_\bfX \begin{bmatrix}\bfSigma_{\bfX,k}&\bf0\\\bf0&\bf0\end{bmatrix} \begin{bmatrix}\bfSigma_{\bfX,k}^{-1}&\bf0\\\bf0&\bf0\end{bmatrix} \bfU_\bfX\t =\bfA \begin{bmatrix}\bfU_{\bfX,k} \bfU_{\bfX,k}\t & \bf0\\\bf0&\bf0\end{bmatrix}.
\end{align*}
\end{itemize}
That is, when $\bfX$ is full rank, we exactly recover $\widehat\bfP=\bfA$, but when $\bfX$ is not full rank, we can only exactly recover $\widehat\bfP \bfX=\bfB$. 

Moreover, when $\bfB=\bfA\bfX$ and $\bfA$ is invertible, and we consider the PAIR inverse surrogate without compression ($r_\bfb=q$ and $r_\bfx=n$) defined with optimal empirical Bayes risk choices:
\begin{itemize}
    \item If $\bfX$ has full row rank, then $$\widehat\bfP^\dagger = \bfX \bfB^\dagger
        = \bfX\bfX\t\bfA\t\left(\bfA\bfX\bfX\t\bfA\t\right)^{-1} 
        = \bfA^{-1}.$$
    \item If $\rank\bfX= k < n$, then $$\widehat\bfP^\dagger = \bfA^{-1} \bfB \bfB^\dagger =\bfA^{-1} \begin{bmatrix}\bfU_{\bfB,k} \bfU_{\bfB,k}\t & \bf0\\\bf0&\bf0\end{bmatrix}.$$
\end{itemize}

Intuitively, these results show that the PAIR forward and inverse surrogates provide approximations that are data-specific, in that the autoencoders provide compression using data-informed projections and the mapping between latent spaces is learned from data. Thus, we would expect that another sample $\bfx$ that is within the distribution of the samples used to create the PAIR network would be mapped well with $\widehat\bfP$.  More specifically, if $\bfx$ is in the column space of $\bfU_{\bfX,k}$, then $\bfA\bfx=\widehat\bfP \bfx$.  We only expect $\widehat\bfP=\bfA$ when the column space of $\bfU_\bfX$ is $\bbR^n$.  Otherwise, $\widehat\bfP$ approximates a matrix with the same action as $\bfA$ for vectors in the span of what we used to construct the PAIR network.

\section{Numerical results}
\label{sec:numerics}
In this section, we present numerical results and illustrations of the theory provided above.  

It is common in applications to work only with data pairs, $\left\{(\bfb_j,\bfx_j)\right\}_{j = 1}^J$ (related by \Cref{eq:inverseproblem}).  However, obtaining good surrogates can be challenging if the number of paired samples is small.  One of the benefits of the PAIR framework is that learning the autoencoders and mapping between latent spaces may be considered independent processes.  Thus, each autoencoder, $\Phi_{\rm ae}^\bfx$ and $\Phi_{\rm ae}^\bfb$, can be constructed in parallel, as a self-supervised learning task and different sizes and types of the datasets can be used. That is if unpaired samples of the data and/or parameters are available (i.e., as $\{\bfx_j\}_{j=1}^{N}$ and $\{\bfb_j\}_{j=1}^{Q}$) they can be added and used to construct both $\Phi_{\rm ae}^\bfx$ and $\Phi_{\rm ae}^\bfx$.  

In \Cref{sub:SheppLogan}, we describe a PAIR network with linear autoencoders for an application in computed tomography (CT) imaging, illustrating the theory described in \Cref{sub:theory_autoencoder}. In particular, we use the empirical Bayes risk minimization interpretation and define
\begin{align*}
    e_\bfx(\bfx) &= \widehat\bfE_\bfx \bfx= \bfU_{\bfX,r_\bfx}^\top\bfx,
    & e_\bfb(\bfb) &= \widehat\bfE_\bfb \bfb= \bfU_{\bfB,r_\bfb}^\top\bfb,\\
    d_\bfx(\bfx) &= \widehat\bfD_\bfx \bfx= \bfU_{\bfX,r_\bfx}\bfx,
    &d_\bfb(\bfb) &= \widehat\bfD_\bfb \bfb= \bfU_{\bfB,r_\bfb}\bfb,
\end{align*}
where $\bfX = [\bfx_1,\ldots, \bfx_J] \in \bbR^{n \times J}$, $\bfB = [\bfb_1,\ldots, \bfb_J] \in \bbR^{q \times J}$, and $r_\bfx$ and $r_\bfb$ are the dimensions of the latent spaces of $\Phi_{\rm ae}^\bfx$ and $\Phi_{\rm ae}^\bfb$, respectively.

Then, in \Cref{sub:MNIST}, we discuss a PAIR network with nonlinear convolutional neural networks (CNNs) for an application in image deblurring. Here, $\Phi_{\rm ae}^\bfx$ and $\Phi_{\rm ae}^\bfb$ are parameterized by $\bftheta^{\rm e}_\bfx,$ $\bftheta^{\rm d}_\bfx,$ $\bftheta^{\rm e}_\bfb,$ and $\bftheta^{\rm d}_\bfb$ (superscripts denote belonging to an encoder or decoder, while subscripts denote belonging to the $\bfx$ or $\bfb$ autoencoders).  To learn the parameters of the autoencoders, we solve the following optimization problems,
\begin{equation*} 
    \min_{\bftheta^{\rm e}_\bfx, \bftheta^{\rm d}_\bfx} 
    \tfrac{1}{J}\sum_{j=1}^J\left|\left|d_\bfx (e_\bfx(\bfx_j;\bftheta^{\rm e}_\bfx); \bftheta^{\rm d}_\bfx) - \bfx_j\right|\right|_2^2
    \hspace{1ex}    \text{and}    \hspace{1ex}
    \min_{\bftheta^{\rm e}_\bfb, \bftheta^{\rm d}_\bfb} \tfrac{1}{J}\sum_{j=1}^J\left|\left|d_\bfb (e_\bfb(\bfb_j;\bftheta^{\rm e}_\bfb); \bftheta^{\rm d}_\bfb) - \bfb_j\right|\right|_2^2.
\end{equation*}
Similar to the linear case, the autoencoders
define the latent spaces $\calZ_\bfx$ and $\calZ_\bfb$ with compressed representations of $\bfx$ and $\bfb$. These representations are given by \begin{align*}
\bfZ_\bfx = 
    \begin{bmatrix}
    | &  & |\\
    e_\bfx(\bfx_1) & \cdots & e_\bfx(\bfx_N)\\
    | &  & |
\end{bmatrix}
\quad
\text{and}
\quad
\bfZ_\bfb = 
\begin{bmatrix}
    | &  & |\\
    e_\bfb(\bfb_1) & \cdots & e_\bfb(\bfb_N)\\
    | &  & |
\end{bmatrix}.
\end{align*}

For both the linear and nonlinear autoencoder examples, we use linear mappings between the latent spaces to connect the autoencoders, given by
\begin{align*}
     \quad \widehat\bfM = \bfZ_\bfb \bfZ_\bfx^\dagger \quad \text{ and } \quad\widehat\bfM^\dagger = \bfZ_\bfx \bfZ_\bfb^\dagger,
\end{align*}
as discussed in \Cref{sub:empBayes latent maps} (\Cref{eq:empirical m and mdagger}).  This task is supervised, requiring $\{(\bfb_j,\bfx_j)\}_{j=1}^J$ pairs.  Code will be provided at \href{https://github.com/emmahart2000/PAIR}{\tt github.com/emmahart2000/PAIR}.

\subsection{Linear PAIR for computed tomography}\label{sub:SheppLogan}
In CT reconstruction, the objective is to obtain images that contain information regarding the internal structure or anatomy of an object, based on projections or measurements collected from the object's exterior. In the following, we use a dataset consisting of phantom images $\bfx$ and sinograms $\bfb$, where the forward model $\bfA$ relates the sinogram and phantom by $\bfA\bfx+ \bfeps=\bfb$.  The forward model corresponds to rotating a source that sends radiation through the object to a detector located on the opposite side of the object.

Each target image $\bfx \in \mathbb{R}^{64\cdot 64}$ is a randomized Shepp-Logan phantom, representing a brain \cite{randomSheppLogan}.  Each sinogram $\bfb\in\bbR^{90\cdot 36}$ is made from $\bfx$ by simulating the forward model of an X-ray CT and adding $5\%$ white noise \cite{gazzola2019ir}.  We randomly generate 10,000 phantoms and simulate the forward process to create 10,000 corresponding sinograms, adding white noise and using these for the self-supervised task of constructing the sinogram autoencoder (excluding the corresponding phantoms from any other use).  We generate another 8,000 phantoms for the self-supervised task of constructing the phantom autoencoder, then we generate another 8,000 phantoms and simulate the forward process to create 8,000 phantom-sinogram pairs that are used for the supervised task of constructing latent mappings.  An additional 2,000 phantom-sinogram pairs were generated for testing and comparing methods.

To construct the autoencoders, we begin by vectorizing the input and target images and storing them in columns of $\bfB \in \mathbb{R}^{3,240\times 10,000}$ and $\bfX\in \mathbb{R}^{4,096\times 8,000}$, respectively.  This PAIR network is fully linear, so the encoder and decoder for $\bfx$ with latent dimension $r_\bfx$ is given by $\widehat\bfE_\bfx = \bfU_{\bfX,r_\bfx}\t$ and $\widehat\bfD_\bfx = \bfU_{\bfX,r_\bfx}$, following \Cref{subsub:empBayes linear} (\Cref{eq:empirical ae} with $\bfK=\bfI$).  Analogously, for $\bfb$ and latent dimension $r_\bfb$, $\widehat\bfE_\bfb = \bfU_{\bfB,r_\bfb}\t$ and $\widehat\bfD_\bfb = \bfU_{\bfB,r_\bfb}$.

As we increase the dimension of the latent space, we expect to attain more accurate autoencoders, and hence more accurate inversion and forward propagation through the PAIR network. In \Cref{fig:linearPAIRex}, we provide relative error norms, averaged over 2,000 testing images, for various ranks.  Note that both the input and target autoencoders perform better for larger ranks, as expected. We observe that as the rank increases, the forward PAIR surrogate provides better approximations of the forward model.  For a fixed rank $r$, the PAIR approximation is even better than the traditional truncated SVD (TSVD) approximation $\bfA_r$, thus demonstrating the benefits of leveraging data for improved surrogate modeling.  For the inverse mapping, we first note that due to ill-posedness of the problem, TSVD reconstructions exhibit a well-known phenomenon called semiconvergence, whereby the reconstruction errors initially decrease as the rank is increased, but as smaller singular values are included in the reconstruction, the reconstruction errors increase \cite{hansen2010discrete}.  We expect and observe similar behavior for the PAIR inverse surrogate. However, compared to TSVD reconstructions, the PAIR inverse surrogate reconstructions achieve smaller average reconstruction errors for all ranks, with the smallest reconstruction errors attained around rank 2,600.

\begin{figure}[tb]
    \centering
    \includegraphics[width=0.75\linewidth]{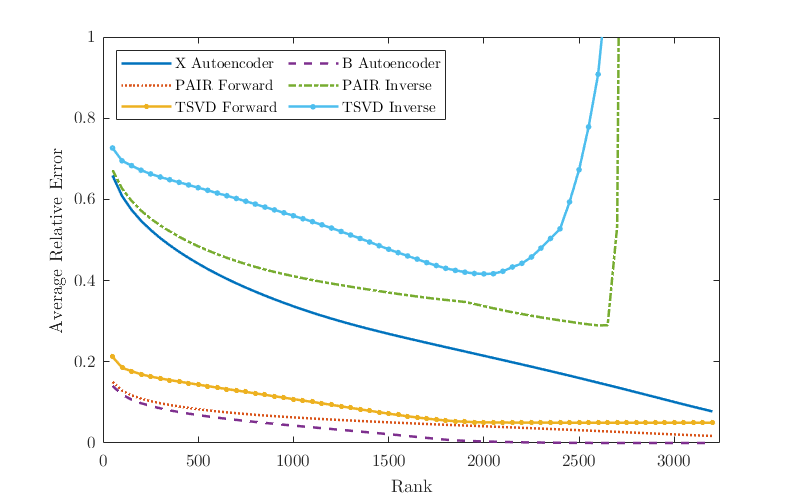}
    \caption{Linear PAIR results for CT. Relative error is averaged over the 2,000 testing images according to the following: X autoencoder, $||\widehat\bfD_\bfx\widehat\bfE_\bfx \bfx - \bfx||_2/||\bfx||_2$; B autoencoder, $||\widehat\bfD_\bfb \widehat\bfE_\bfb \bfb - \bfb||_2/||\bfb||_2$;
    PAIR Forward, $||\widehat\bfP \bfx - \bfb||_2/||\bfb||_2$;
    PAIR Inverse, $||\widehat\bfP^\dagger \bfb - \bfx ||_2/||\bfx||_2$;
    TSVD Forward, $||\bfU_{\bfA,r} \bfSigma_{\bfA,r} \bfV^\top_{\bfA,r}\bfx - \bfb||_2/||\bfb||_2$; and
    TSVD Inverse, $||\bfV_{\bfA,r} \bfSigma_{\bfA,r}^{-\top} \bfU_{\bfA,r}^\top \bfb- \bfx||_2/||\bfx||_2$.
    }
    \label{fig:linearPAIRex}
\end{figure}

To provide better intuition about the compression and reconstruction process, we provide in \Cref{fig:latentexamples} reconstructions corresponding to one of the testing images.  The true phantom and observed sinogram are provided in the top right corner.  The top two rows of images contain image representations for autoencoders of different ranks for $\bfb$ and $\bfx$, respectively. The bottom two rows of images contain reconstructions for both the PAIR forward surrogate and the PAIR inverse surrogate for different sizes of the latent space.  Notice that the quality of the surrogates appears to be limited by the quality of the autoencoders.

\begin{figure}[tb]
    \centering
    \includegraphics[width=\linewidth]{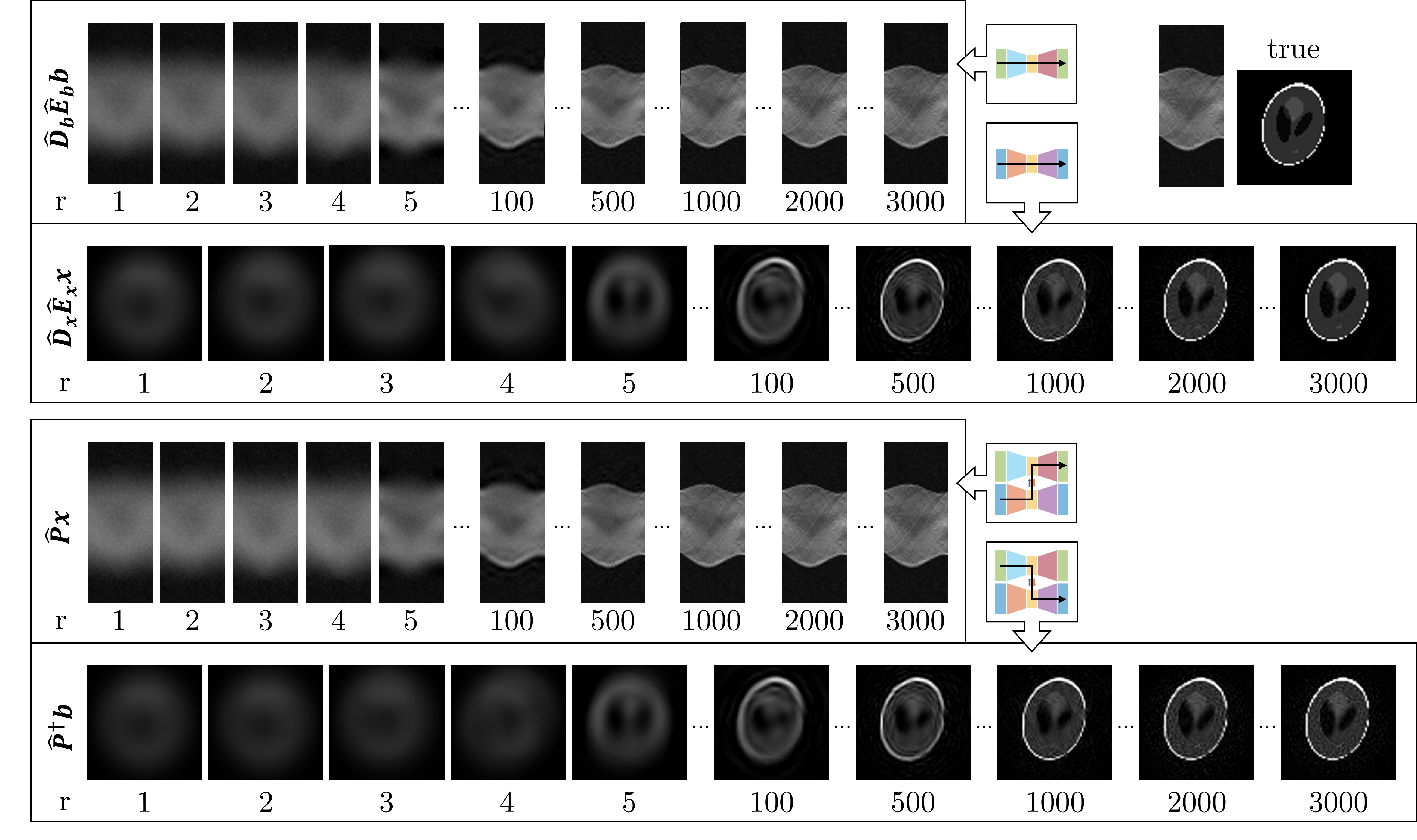}
    \caption{An illustration for one example image pair from the testing set for the CT example (true image shown in the top right corner). The top two rows contain reconstructed sinograms and phantoms from the autoencoders for $\bfb$ and $\bfx$ respectively, for different latent dimensions.  The bottom two rows contain reconstructions for the PAIR forward surrogate and the PAIR inverse surrogate respectively, for different latent dimensions.}
    \label{fig:latentexamples}
\end{figure}

\subsection{Nonlinear PAIR for image deblurring}
\label{sub:MNIST}
In this example, we illustrate PAIR with nonlinear CNN autoencoders and linear mappings between latent spaces.  We use the MNIST dataset \cite{deng2012mnist} of 28-by-28 pixel handwritten digits and consider $\bfA\bfx+\bfeps=\bfb$ where $\bfA$ represents blurring with a Gaussian kernel ($8\times8$ blur kernel with $\sigma=10$, so it resembles a box-car blur), $\bfeps$ is a realization of Gaussian white noise (with variance $0.01$), $\bfx$ is a sample from the original MNIST images, and $\bfb$ is a corrupted observation of $\bfx$.

\subsubsection{Model architecture, results, and comparisons}
\label{sub:MNIST_PAIR}
We begin with the self-supervised learning task of creating two nonlinear autoencoders, $\Phi_{\rm ae}^\bfb$ and $\Phi_{\rm ae}^\bfx$.  The first autoencoder is used for the input blurred MNIST images, $\bfb$, and the second autoencoder is used to represent the clear target images $\bfx$.  We split the data set into 50,000 training images, 10,000 validation images, and 10,000 testing images. We use the same architecture and methodologies for both the input and target autoencoders.  

For the autoencoder $\Phi_{\rm ae}^\bfx$,  we consider learning approaches to approximate the mapping $\bfx_{j}\to\bfx_{j}$ for $j=1,2,\ldots, 50,\!000$, designed with an hour-glass shape to learn some compressed representation of our original images.  For this application, we use a CNN consisting of five convolutional layers, with 2, 3, 3, 2, and 1 channel(s), respectively.  The first two convolutional layers make up the encoder, $e_\bfx$, and the last three make up the decoder, $d_\bfx$.  Each layer is padded and uses a $3\times 3$ kernel with a stride length equal to 1.  To create a reduced dimension latent space, we use max pooling in the encoder with a $2\times2$ pooling window.  In the decoder, we use upsampling with an upsampling factor of 2 for both the rows and columns.  Each layer uses a ReLU activation function, save the last layer of the decoder, which uses a sigmoid activation function.  The encoder includes 77 parameters and the decoder includes 159 for a total of 236 learnable parameters defining the autoencoder.  This architecture is intentionally kept simple to highlight the core concepts and facilitate understanding.  More sophisticated architectures could be used and may allow for even more compression and expressivity in the input/target latent spaces.  

To learn the parameters, we utilize the ADAM optimizer with a piecewise constant learning rate scheduler.  Both the input and target autoencoders use 400 epochs to train. The first hundred epochs use a learning rate of $10^{-3}$, the second hundred use $10^{-4}$, the third hundred use $10^{-3}$, and the last hundred use $10^{-4}$.  We learn in batches of 256 and use mean squared error to define the loss function. The linear latent forward map $\widehat\bfM$ and inverse map $\widehat\bfM^\dagger$ are obtained following \Cref{sub:empBayes latent maps} (\Cref{eq:empirical m and mdagger}).

\Cref{fig:MNISTautoencoderInversion} shows results for 10 reconstructions from inversion through the PAIR network, where the reconstruction of $\bfx$ from $\bfb$ is given as $\bfx_{\rm pred} = d_\bfx(\widehat\bfM^\dagger e_\bfb(\bfb))$. For comparison, the target/true image is provided along with the absolute pixel-wise error image. Although some finer details are lost and, in certain cases, ambiguous digits appear transformed, the PAIR inverse reconstructions demonstrate an ability to produce plausible solutions that effectively denoise and deblur the corrupted input images. For each test example, we compute the relative reconstruction error norm as $\text{rel}={||\bfx_{\rm pred}-\bfx||_2}/{||\bfx||_2}$, where  $\bfx_{\rm pred}$ is the predicted reconstruction and $\bfx$ is the true image.  Over the test set, the average relative reconstruction error is $0.3783$. 

\begin{figure}[htbp]
    \centering
    \includegraphics[width=0.9\linewidth]{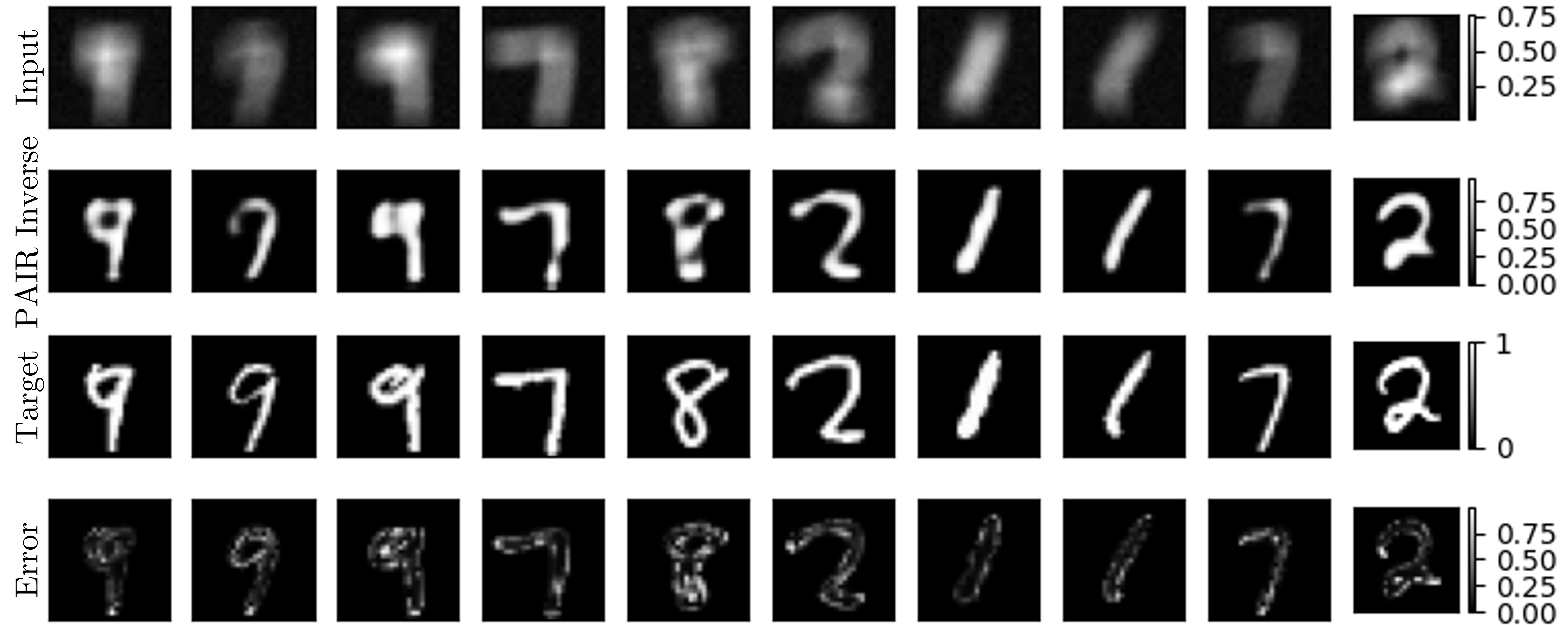}
    \caption{Example images inverted through the PAIR network.  For each sample from the test set, the top row shows a blurred input digit, the second row shows the predicted reconstruction, the third row shows the true original target image and the fourth row shows the absolute pixel-wise error between the true and predicted.}
    \label{fig:MNISTautoencoderInversion}
\end{figure}

Next, we provide a comparison of our PAIR inversion approach with an end-to-end approach, for different numbers of paired training samples. We use the same architecture of the encoder/decoder network to directly learn the input-output mapping $\widebar\Phi: \bfb_j \mapsto \bfx_j$ for all $j=1,2, \ldots, J$. For the nonlinear PAIR, the self-supervised learning task is conducted with all training images. The (linear) supervised learning task is conducted independently for each restricted number of supervised training samples.
For the direct inversion network, a first network is trained with 1,000 supervised samples for 400 epochs (100 with learning rate $10^{-3}$, 100 with $10^{-4}$.  For each 500 more supervised training samples allowed, the network is allowed to refine for 35 epochs (15 epochs with learning rate $10^{-3}$, 10 with $10^{-4}$).

In \Cref{fig:PAIRvsFull}, we provide the average relative reconstruction errors for the PAIR inverse reconstruction and the encoder-decoder inversion for varying $J$.  We can see that for large datasets (e.g., using all 60,000 training pairs), the end-to-end network slightly outperforms the PAIR network.  However, in the case that we have limited paired data available for the supervised learning task, the PAIR network can outperform the direct end-to-end approach by exploiting the abundance of unpaired samples for the self-supervised learning task.

\begin{figure}[htbp]
    \centering
    \includegraphics[width=0.7\linewidth]{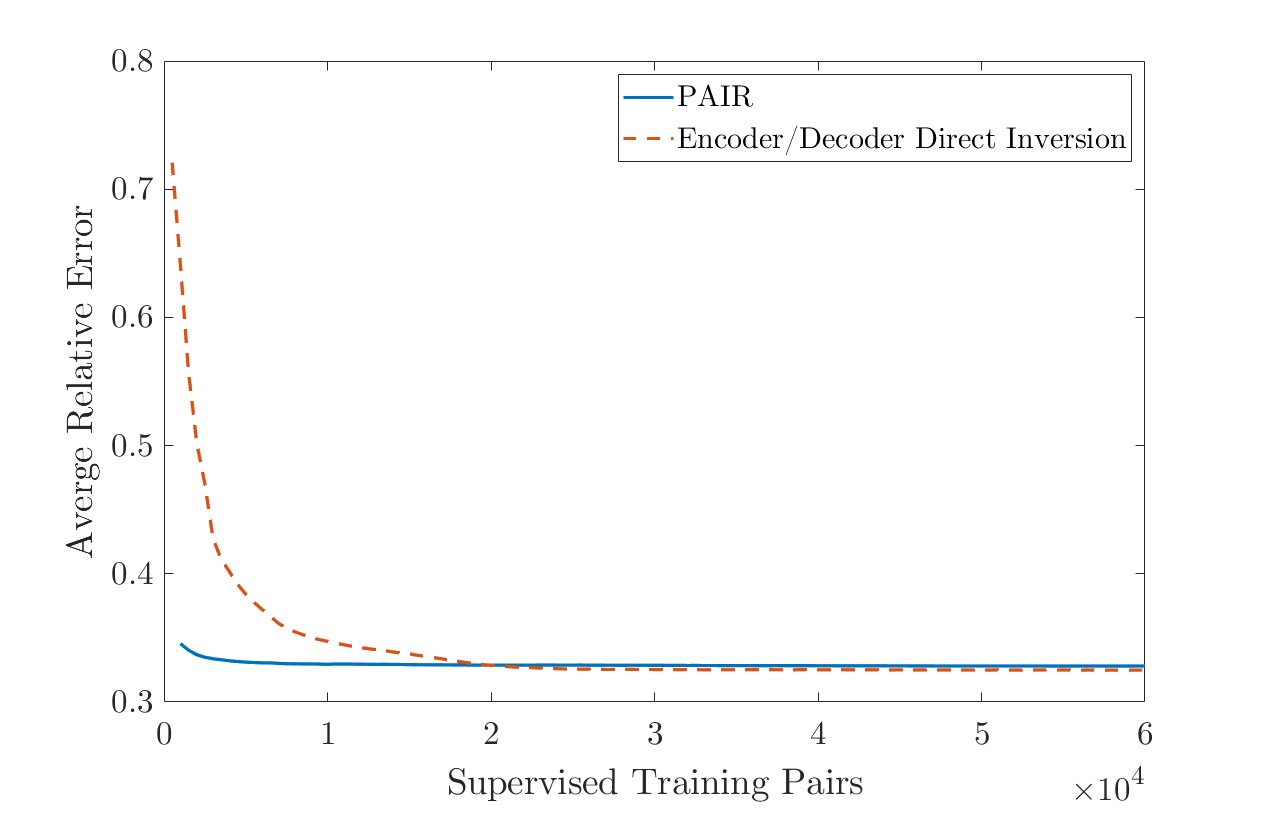}
    \caption{Average relative reconstruction error norms for the testing data for the nonlinear PAIR reconstruction and the direct encoder/decoder inversion network, for various numbers of supervised training pairs.  
    }
    \label{fig:PAIRvsFull}
\end{figure}

\subsubsection{Evaluation metrics from PAIR}
In the previous examples, we used the relative reconstruction error norm to evaluate the quality of a reconstruction, but since the true solution is not available in practice, other metrics are needed.  For example, a standard approach to determine if a reconstruction $\bfx_{\rm pred}$ is a plausible solution is to compute the residual,
\begin{equation}
    \bfr = A(\bfx_{\rm pred}) - \bfb.
\end{equation}
If the norm of $\bfr$ is sufficiently small, then the data is fitted well.  However, in practice, the forward model may not be available, or it may be prohibitively expensive to apply, making the residual norm impractical and/or impossible to compute.  

The PAIR framework can address this limitation by providing several cheaply computable metrics that can indicate the quality of a solution.  We consider five PAIR evaluation metrics:
\begin{itemize}
    \item the relative difference between the original and autoencoded observation,
    \begin{equation*}
        \frac{|| (d_\bfb \circ e_\bfb)(\bfb) - \bfb||_2}{||\bfb||_2}
    \end{equation*}
    \item the relative difference between the original and autoencoded prediction, 
    \begin{equation*}
        \frac{||(d_\bfx \circ e_\bfx)(\bfx_{\rm pred})||_2}{||\bfx_{\rm pred}||_2}
    \end{equation*}
    \item the relative residual estimate, 
    \begin{equation*}
        \frac{||d_\bfb(\widehat\bfM e_\bfx(\bfx_{\rm pred})) - \bfb ||_2}{||\bfb||_2}
    \end{equation*}
    \item the relative difference in the latent data space, 
    \begin{equation*}
        \frac{||\widehat\bfM^\dagger e_\bfb(\bfb) - e_\bfx(\bfx_{\rm pred})||_2}{||e_\bfx(\bfx_{\rm pred})||_2}
    \end{equation*}
    \item the relative difference in the latent parameter space,
    \begin{equation*}
        \frac{||\widehat\bfM e_\bfx(\bfx_{\rm pred}) - e_\bfb(\bfb)||_2}{||e_\bfb(\bfb)||_2}
    \end{equation*}
\end{itemize}
All of these PAIR metrics can be computed for the training data, providing a baseline distribution for comparison.  
Similar to \cite{chung2024paired}, we note that PAIR does not yield a probability density.  However, for a new observation $\bfb$ and predicted reconstruction $\bfx_{\rm pred}$, we can use the PAIR metrics for out-of-distribution (OOD) detection.  If the PAIR metrics for the new sample lie within high-probability regions, we have some indication we can trust the prediction from the PAIR network.  If the metrics for the new sample lie in a low probability region, further investigation is required.

Next, we provide an empirical investigation of the PAIR metrics for the MNIST PAIR described above, where we consider PAIR reconstructions for in-distribution and out-of-distribution data.  The MNIST PAIR network we have trained is  small, which may mean that it has limited generalizability (i.e., for ``similar enough" images, it may be sufficient). We will use the notMNIST \cite{bulatov2011notmnist} dataset of typed letters for illustration.  The MNIST and notMNIST images share many structural similarities: they are the same size, they are greyscale, and they contain a single contiguous white shape. Further, we use the same blurring operator and noise level.  

\begin{figure}[tb]
    \centering
    \includegraphics[width=0.8\linewidth]{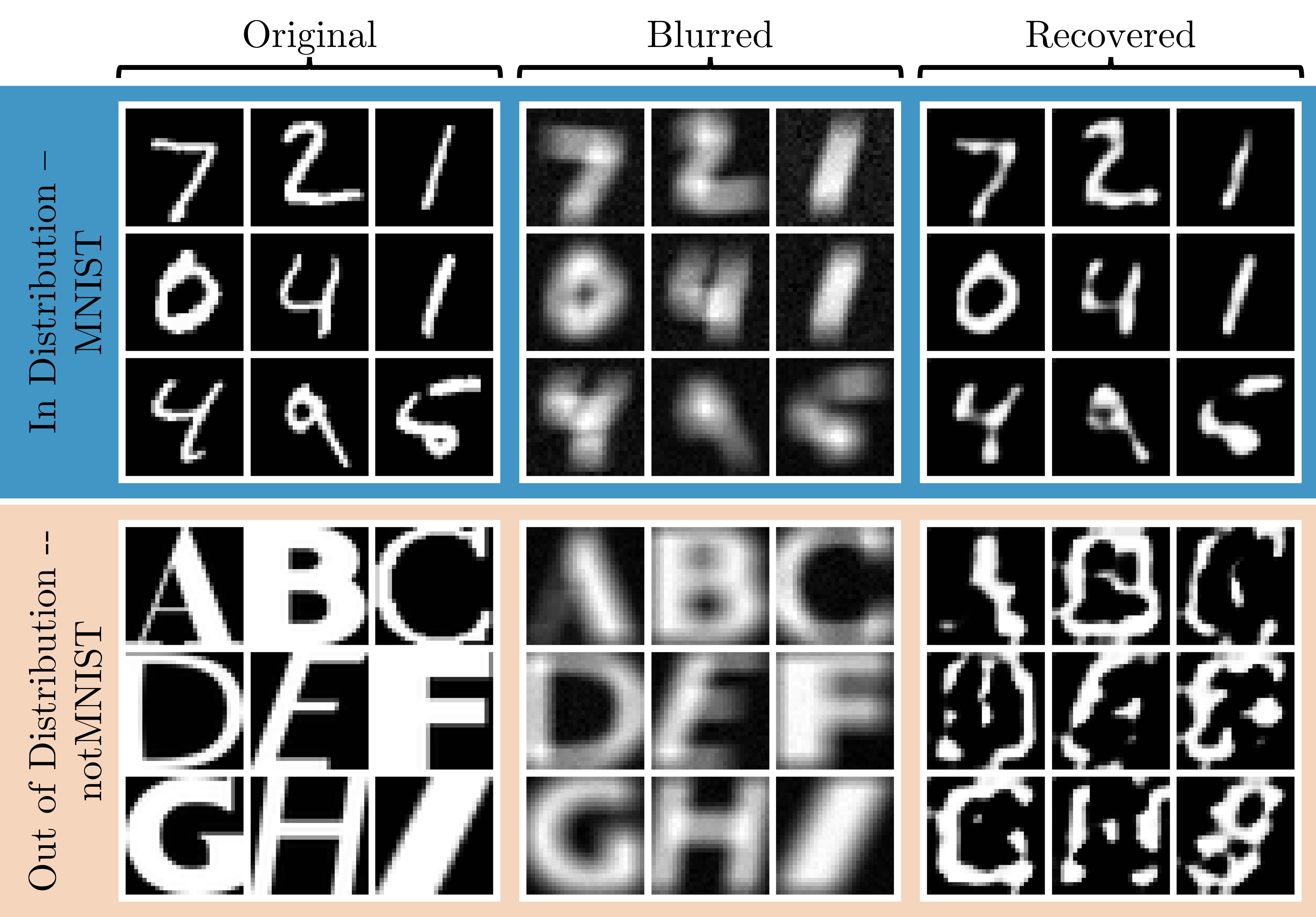}
    \caption{Examples of original, blurred, and recovered images for in-distribution samples (top row, from MNIST test set) and out-of-distribution samples (bottom row, from notMNIST dataset).  Images are recovered with the MNIST PAIR described in \Cref{sub:MNIST_PAIR}.}
    \label{fig:OODexamples}
\end{figure}

In \Cref{fig:OODexamples}, we provide some examples of original, blurred, and recovered images from both the in-distribution MNIST images and the OOD notMNIST images.  
Here it is clear that PAIR does not provide good reconstructions for the OOD images, and this is further confirmed by the PAIR metrics presented in \Cref{fig:OODhist}, where the distributions of each of the five metrics for in- and out-of-distribution samples are overlayed. For each of the PAIR metrics, the in-distribution samples (dark blue) come from the training set of MNIST, while out-of-distribution samples (light orange) come from notMNIST.  The last two metrics, where the difference is calculated in the autoencoder latent spaces, provide a clear separation between the two distributions. This is further confirmed in \Cref{fig:latent_metrics} where the two latent space metrics (that seem the most helpful for our exploration) are displayed as two separated densities for the in-distribution MNIST samples (dark blue) and OOD notMNIST samples. In summary, the PAIR metrics can help us to decide if a new image is ``similar enough" to the distribution on which the network was trained, thereby indicating whether it can be expected to have a good reconstruction.

\begin{figure}[htbp]
    \begin{tabular}{ccccc}
         \includegraphics[width=0.19\linewidth]{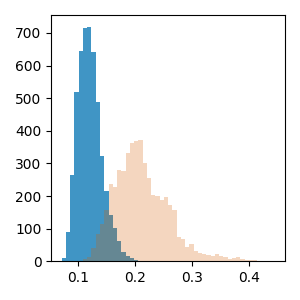} & \hspace{-0.5cm}
         \includegraphics[width=0.19\linewidth]{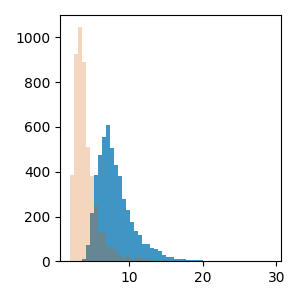} & \hspace{-0.5cm}
         \includegraphics[width=0.19\linewidth]{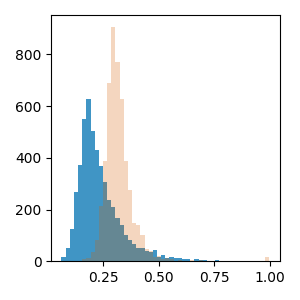} & \hspace{-0.5cm}
         \includegraphics[width=0.19\linewidth]{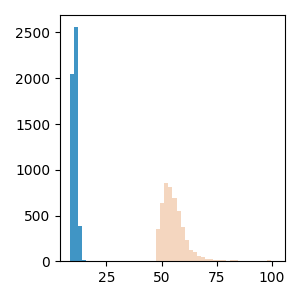} & \hspace{-0.5cm}
         \includegraphics[width=0.19\linewidth]{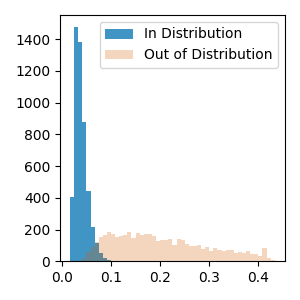} \\
         \scalebox{.65}
         {$\frac{\|(d_\bfb \circ e_\bfb)(\bfb)\|_2}{\|\bfb \|_2}$} &
         \scalebox{.65}
         {$\frac{\|d_\bfb(\bfM e_\bfx(\bfx_{\rm pred})) - \bfb\|_2}{\|\bfb\|_2}$} &
         \scalebox{.65}
         {$\frac{\|(d_\bfx \circ e_\bfx)(\bfx_{\rm pred}) - \bfx_{\rm pred}\|_2}{\|\bfx_{\rm pred} \|_2}$} &
         \scalebox{.65}
         {$\frac{\|\bfM^\dagger e_\bfb(\bfb) - e_\bfx(\bfx_{\rm pred}) \|_2}{\|e_\bfx(\bfx_{\rm pred}) \|_2}$} &
         \scalebox{.65}
         {$\frac{\|\bfM e_\bfx(\bfx_{\rm pred}) - e_\bfb(\bfb) \|_2}{\|e_\bfb(\bfb)\|_2}$}
    \end{tabular}
    \caption{PAIR metrics for indicating out-of-distribution.  In-distribution samples from the MNIST test set are shown in dark blue, while OOD notMNIST samples are shown in light orange (5000 samples each).
    }
    \label{fig:OODhist}
\end{figure}

\begin{figure}[htbp]
\centering
\begin{tikzpicture}
        \node[anchor=south west,inner sep=0] (image) at (0,0) {\includegraphics[width=0.7\textwidth]{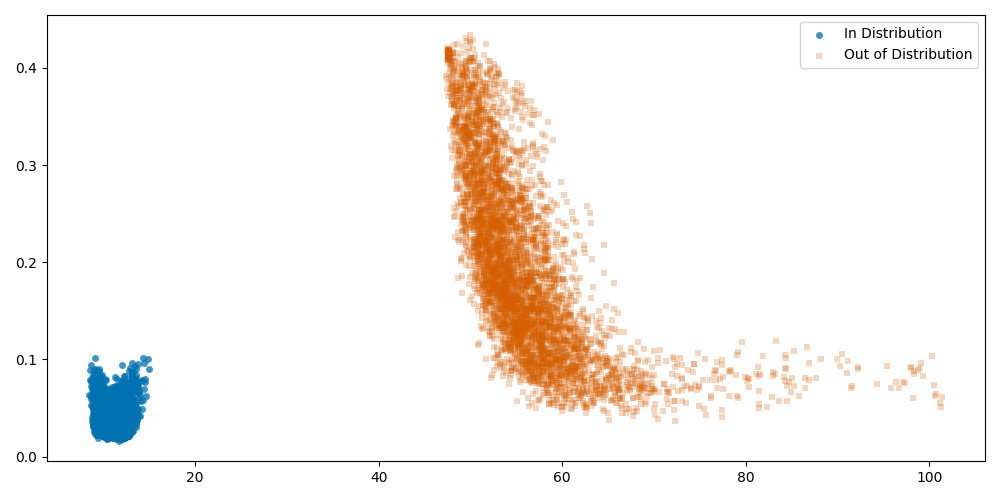}};
        \begin{scope}[x={(image.south east)},y={(image.north west)}]
            \node[rotate=90] at (-0.08, 0.5) {\textbf{$\frac{\|\bfM^\dagger e_\bfb(\bfb) - e_\bfx(\bfx_{\rm pred}) \|_2}{\|e_\bfx(\bfx_{\rm pred}) \|_2}$}}; 
            \node at (0.5, -0.1) {\textbf{$\frac{\|\bfM^\dagger e_\bfb(\bfb) - e_\bfx(\bfx_{\rm pred}) \|_2}{\|e_\bfx(\bfx_{\rm pred}) \|_2}$}}; 
        \end{scope}
    \end{tikzpicture}
    \caption{Scatterplot of samples from within distribution (MNIST) and outside of distribution (notMNIST) with the two most informative metrics explored earlier: the relative differences in the latent data and parameter spaces.}
    \label{fig:latent_metrics}
\end{figure}

\section{Conclusions}
\label{sec:conclusions}
In this work, we describe a paired autoencoder framework for inverse problems, where forward and inverse surrogates are obtained by combining two separate autoencoders for the input and target spaces with an optimal mapping between latent spaces. By exploiting Bayes risk and empirical Bayes risk minimization interpretations, theoretical results for PAIR are provided, with connections to existing works on low-rank matrix approximations and possible extensions to include other error metrics. Compared to end-to-end networks for inverse problems, the PAIR framework has the following potential benefits. First, PAIR uses self-supervised training for the autoencoders, so these can be done in parallel. Moreover, the number of training data for the input and the target space can be different (e.g., in medical imaging, datasets corresponding to observed sinograms may be significantly larger than datasets of true images/phantoms).  This allows the network to take full advantage of all available data. Second, supervised training of the forward/inverse model is only performed between the latent spaces.  Thus, for different forward models or a new application, the same target images (e.g., outputs) and the corresponding autoencoder can be reused.  Third, PAIR provides cheap metrics that can indicate whether or not a new sample is within the distribution on which the networks were trained.  

There are many potential applications and extensions of this work. 
By separating dimension reduction and inversion/forward propagation, we may independently address uncertainties arising from model imperfections, data noise, and data compression.
Future work includes using the PAIR framework to approximate adjoints for problems where the adjoint is too computationally intensive or impossible to compute (e.g., in the context of inexact Krylov methods).  The PAIR framework can also be used to define new data-driven priors (e.g., to approximate the mean and prior covariance matrix) and to investigate the uncertainty in solutions.

\bibliographystyle{siamplain}
\bibliography{references}

\begin{thebibliography}{10}

\bibitem{afkham2021learning}
{\sc B.~M. Afkham, J.~Chung, and M.~Chung}, {\em Learning regularization parameters of inverse problems via deep neural networks}, Inverse Problems, 37 (2021), p.~105017.

\bibitem{antoulas2020interpolatory}
{\sc A.~C. Antoulas, C.~A. Beattie, and S.~G{\"u}{\u{g}}ercin}, {\em Interpolatory methods for model reduction}, SIAM, 2020.

\bibitem{arridge2019solving}
{\sc S.~Arridge, P.~Maass, O.~{\"O}ktem, and C.-B. Sch{\"o}nlieb}, {\em Solving inverse problems using data-driven models}, Acta Numerica, 28 (2019), pp.~1--174.

\bibitem{atkinson2007optimum}
{\sc A.~Atkinson, A.~Donev, and R.~Tobias}, {\em Optimum experimental designs, with SAS}, vol.~34, OUP Oxford, 2007.

\bibitem{bai2020deep}
{\sc Y.~Bai, W.~Chen, J.~Chen, and W.~Guo}, {\em Deep learning methods for solving linear inverse problems: Research directions and paradigms}, Signal Processing, 177 (2020), p.~107729, \url{https://doi.org/10.1016/j.sigpro.2020.107729}.

\bibitem{baldi1989neural}
{\sc P.~Baldi and K.~Hornik}, {\em Neural networks and principal component analysis: Learning from examples without local minima}, Neural Networks, 2 (1989), pp.~53--58.

\bibitem{bao2020regularized}
{\sc X.~Bao, J.~Lucas, S.~Sachdeva, and R.~B. Grosse}, {\em Regularized linear autoencoders recover the principal components, eventually}, Advances in Neural Information Processing Systems, 33 (2020), pp.~6971--6981.

\bibitem{benner2015survey}
{\sc P.~Benner, S.~Gugercin, and K.~Willcox}, {\em A survey of projection-based model reduction methods for parametric dynamical systems}, SIAM Review, 57 (2015), pp.~483--531.

\bibitem{bourlard1988auto}
{\sc H.~Bourlard and Y.~Kamp}, {\em Auto-association by multilayer perceptrons and singular value decomposition}, Biological cybernetics, 59 (1988), pp.~291--294.

\bibitem{bulatov2011notmnist}
{\sc Y.~Bulatov}, {\em Not{MNIST} dataset}, Google (Books/OCR), Tech. Rep.[Online]. Available: http://yaroslavvb. blogspot. it/2011/09/notmnist-dataset. html, 2 (2011), p.~4.

\bibitem{chung2017optimal}
{\sc J.~Chung and M.~Chung}, {\em Optimal regularized inverse matrices for inverse problems}, SIAM Journal on Matrix Analysis and Applications, 38 (2017), pp.~458--477, \url{https://doi.org/10.1137/16M1066531}.

\bibitem{chung2021computational}
{\sc J.~Chung and S.~Gazzola}, {\em Computational methods for large-scale inverse problems: A survey on hybrid projection methods}, SIAM Review, 66 (2024), pp.~205--284, \url{https://doi.org/10.1137/21M1441420}.

\bibitem{randomSheppLogan}
{\sc M.~Chung}, {\em Randomized {S}hepp-{L}ogan phantom}, 2020, \url{https://github.com/matthiaschung/Random-Shepp-Logan-Phantom}.

\bibitem{chung2024sparse}
{\sc M.~Chung, R.~Archibald, P.~Atzberger, and J.~M. Solomon}, {\em Sparse ${L}^{1}$-autoencoders for scientific data compression}, arXiv preprint arXiv:2405.14270,  (2024).

\bibitem{chung2024paired}
{\sc M.~Chung, E.~Hart, J.~Chung, B.~Peters, and E.~Haber}, {\em Paired autoencoders for likelihood-free estimation in inverse problems}, Machine Learning: Science and Technology, 5 (2024), p.~045055.

\bibitem{deng2012mnist}
{\sc L.~Deng}, {\em The {MNIST} database of handwritten digit images for machine learning research}, IEEE Signal Processing Magazine, 29 (2012), pp.~141--142.

\bibitem{enting2002inverse}
{\sc I.~G. Enting}, {\em Inverse problems in atmospheric constituent transport}, Cambridge University Press, 2002.

\bibitem{feng2022intriguing}
{\sc Y.~Feng, Y.~Chen, S.~Feng, P.~Jin, Z.~Liu, and Y.~Lin}, {\em An intriguing property of geophysics inversion}, in International Conference on Machine Learning, PMLR, 2022, pp.~6434--6446.

\bibitem{feng2024auto}
{\sc Y.~Feng, Y.~Chen, P.~Jin, S.~Feng, Z.~Liu, and Y.~Lin}, {\em Auto-linear phenomenon in subsurface imaging}, 2024, \url{https://arxiv.org/abs/2305.13314}.

\bibitem{feng2024hidden}
{\sc Y.~Feng, Y.~Chen, Y.~Lee, and Y.~Lin}, {\em On a hidden property in computational imaging}, 2024, \url{https://arxiv.org/abs/2410.08498}.

\bibitem{gazzola2019ir}
{\sc S.~Gazzola, P.~C. Hansen, and J.~G. Nagy}, {\em {IR Tools}: a {MATLAB} package of iterative regularization methods and large-scale test problems}, Numerical Algorithms, 81 (2019), pp.~773--811.

\bibitem{goh2019solving}
{\sc H.~Goh, S.~Sheriffdeen, J.~Wittmer, and T.~Bui-Thanh}, {\em Solving {B}ayesian inverse problems via variational autoencoders}, 2021, \url{https://arxiv.org/abs/1912.04212}.

\bibitem{gonzalez2018deep}
{\sc F.~J. Gonzalez and M.~Balajewicz}, {\em Deep convolutional recurrent autoencoders for learning low-dimensional feature dynamics of fluid systems}, 2018, \url{https://arxiv.org/abs/1808.01346}.

\bibitem{goodfellow2016deep}
{\sc I.~Goodfellow, Y.~Bengio, and A.~Courville}, {\em Deep {L}earning}, MIT Press, Boston, 2016.

\bibitem{hadamard1923lectures}
{\sc J.~Hadamard}, {\em Lectures on Cauchy's Problem in Linear Differential Equations}, Yale University Press, New Haven, 1923.

\bibitem{hansen2010discrete}
{\sc P.~C. Hansen}, {\em Discrete Inverse Problems: Insight and Algorithms}, SIAM, 2010.

\bibitem{hinton2006reducing}
{\sc G.~E. Hinton and R.~R. Salakhutdinov}, {\em Reducing the dimensionality of data with neural networks}, science, 313 (2006), pp.~504--507.

\bibitem{kovachki2023neural}
{\sc N.~Kovachki, Z.~Li, B.~Liu, K.~Azizzadenesheli, K.~Bhattacharya, A.~Stuart, and A.~Anandkumar}, {\em Neural operator: Learning maps between function spaces}, 2023, \url{https://arxiv.org/abs/2108.08481}.

\bibitem{kulkarni2016reconnet}
{\sc K.~Kulkarni, S.~Lohit, P.~Turaga, R.~Kerviche, and A.~Ashok}, {\em Reconnet: Non-iterative reconstruction of images from compressively sensed measurements}, in Proceedings of the IEEE Conference on Computer Vision and Pattern Recognition (CVPR), June 2016.

\bibitem{lan2022scaling}
{\sc S.~Lan, S.~Li, and B.~Shahbaba}, {\em Scaling up {B}ayesian uncertainty quantification for inverse problems using deep neural networks}, SIAM/ASA Journal on Uncertainty Quantification, 10 (2022), pp.~1684--1713, \url{https://doi.org/10.1137/21M1439456}.

\bibitem{lee2020model}
{\sc K.~Lee and K.~T. Carlberg}, {\em Model reduction of dynamical systems on nonlinear manifolds using deep convolutional autoencoders}, Journal of Computational Physics, 404 (2020), p.~108973.

\bibitem{li2020nett}
{\sc H.~Li, J.~Schwab, S.~Antholzer, and M.~Haltmeier}, {\em {NETT}: Solving inverse problems with deep neural networks}, Inverse Problems, 36 (2020), p.~065005.

\bibitem{lucas2018using}
{\sc A.~Lucas, M.~Iliadis, R.~Molina, and A.~K. Katsaggelos}, {\em Using deep neural networks for inverse problems in imaging: Beyond analytical methods}, IEEE Signal Processing Magazine, 35 (2018), pp.~20--36, \url{https://doi.org/10.1109/MSP.2017.2760358}.

\bibitem{pichi2024graph}
{\sc F.~Pichi, B.~Moya, and J.~S. Hesthaven}, {\em A graph convolutional autoencoder approach to model order reduction for parametrized {PDE}s}, Journal of Computational Physics, 501 (2024), p.~112762.

\bibitem{piening2024paired}
{\sc M.~Piening and M.~Chung}, {\em Paired {W}asserstein autoencoders for conditional sampling}, arXiv preprint arXiv:2412.07586,  (2024).

\bibitem{plaut2018principal}
{\sc E.~Plaut}, {\em From principal subspaces to principal components with linear autoencoders}, arXiv preprint arXiv:1804.10253,  (2018).

\bibitem{pukelsheim2006optimal}
{\sc F.~Pukelsheim}, {\em Optimal design of experiments}, SIAM, 2006.

\bibitem{salakhutdinov2007restricted}
{\sc R.~Salakhutdinov, A.~Mnih, and G.~Hinton}, {\em Restricted {B}oltzmann machines for collaborative filtering}, in Proceedings of the 24th International Conference on Machine learning, 2007, pp.~791--798.

\bibitem{sanz2023inverse}
{\sc D.~Sanz-Alonso, A.~Stuart, and A.~Taeb}, {\em Inverse problems and data assimilation}, vol.~107, Cambridge University Press, 2023.

\bibitem{torralba2008small}
{\sc A.~Torralba, R.~Fergus, and Y.~Weiss}, {\em Small codes and large image databases for recognition}, in 2008 IEEE Conference on Computer Vision and Pattern Recognition, IEEE, 2008, pp.~1--8.

\bibitem{ucinski2004optimal}
{\sc D.~Ucinski}, {\em Optimal measurement methods for distributed parameter system identification}, CRC press, 2004.

\bibitem{wang2024wave}
{\sc H.~Wang, Y.~Chen, J.~Kang, Y.~Wu, Y.~J. Kim, and Y.~Lin}, {\em Wavediffusion: Exploring full waveform inversion via joint diffusion in the latent space}, 2024, \url{https://arxiv.org/abs/2410.09002}.

\bibitem{wang2016auto}
{\sc Y.~Wang, H.~Yao, and S.~Zhao}, {\em Auto-encoder based dimensionality reduction}, Neurocomputing, 184 (2016), pp.~232--242.

\end{thebibliography}
\end{document}